\definecolor{DarkGreen}{rgb}{0.1,0.5,0.1}
\definecolor{DarkRed}{rgb}{0.5,0.1,0.1}
\definecolor{DarkBlue}{rgb}{0.1,0.1,0.5}
\definecolor{RoyalBlue}{RGB}{0,100,170}
\definecolor{peach}{rgb}{1, 0.56, 0.56}
\definecolor{midgray}{RGB}{150,150,150}
\definecolor{EasternBlue}{RGB}{37,150,190}
\definecolor{sand}{RGB}{250,150,120}
\definecolor{grass}{RGB}{120, 190, 50}
\definecolor{sky}{RGB}{50,150,250}
\definecolor{Orange}{RGB}{250,150,50}
\definecolor{Cerulean}{RGB}{80,150,220}
\definecolor{Emerald}{RGB}{62,156,94}
\definecolor{Rouge}{RGB}{250,95,95}
\definecolor{ColorDef}{RGB}{80, 180, 150}
\definecolor{RevisionRed}{RGB}{240,35,35}
\definecolor{Confirm}{RGB}{103, 162, 229}
\definecolor{TODO}{RGB}{250,150,50}
\definecolor{Future}{RGB}{198, 121, 235}
\definecolor{myGold}{RGB}{231,141,20}
\definecolor{myBlue}{rgb}{0.19,0.41,.65}
\definecolor{myPurple}{RGB}{175,0,124}
\definecolor{Gred}{RGB}{219, 50, 54}
\definecolor{Ggreen}{RGB}{60, 186, 84}
\definecolor{Gblue}{RGB}{72, 133, 237}
\definecolor{Gyellow}{RGB}{247, 178, 16}
\newtheorem{theorem}{Theorem}
\newtheorem{proposition}{Proposition}
\newtheorem{definition}{Definition}
\newtheorem{remark}{Remark}
\newcommand{\newreptheorem}[2]
{\newenvironment{rep#1}[1]
	{\def\rep@title{#2 \ref{##1}} \begin{rep@theorem}}%
		{\end{rep@theorem}}}
\crefname{hypothesis}{Hypothesis}{Hypothesis}
\crefname{condition}{Condition}{Conditions}
\crefname{distribution}{Distribution}{Distributions}
\crefname{thm}{Theorem}{Theorems}
\newcommand{\eps}{\varepsilon}
\def\1{\bm{1}}
\DeclareMathAlphabet{\mathsfit}{\encodingdefault}{\sfdefault}{m}{sl}
\SetMathAlphabet{\mathsfit}{bold}{\encodingdefault}{\sfdefault}{bx}{n}
\newcommand{\N}{\mathbb{N}}
\DeclareMathOperator*{\argmax}{arg\,max}
\def\abs#1{\left| #1 \right|}
\newcommand*{\prob}{\mathbb{P}}
\newcommand{\verifier}{\mathsf{V}}
\newcommand{\lm}{\mathsf{LM}}
\newcommand{\backtrackQuota}{Q}
\newcommand{\backtrackStride}{B}
\newcommand{\prompt}{x}
\newcommand{\distrib}{\mathcal{D}_{\mathsf{Dyck}}}
\newcommand{\probSquareBracket}{p}
\newcommand{\squareOpen}{\texttt{[}}
\newcommand{\squareClosed}{\texttt{]}}
\newcommand{\roundOpen}{\texttt{(}}
\newcommand{\roundClosed}{\texttt{)}}
\newcommand{\acc}{\text{Acc}}
\newcommand{\algoName}{Tokenwise rejection sampling with backtracking }
\newcommand{\temperature}{T}
\newcommand{\dyck}{\mathsf{Dyck}}
\newcommand{\depth}{d}
\title{
On the Query Complexity of Verifier-Assisted Language Generation
}
\author{
    Edoardo Botta$^1$\footnotemark[1]
    \quad Yuchen Li$^1$\footnotemark[1]
    \quad Aashay Mehta$^1$\footnote{
    Equal contribution.
    Correspond to: Yuchen Li \texttt{yuchenl4@cs.cmu.edu} and
    Andrej Risteski \texttt{aristesk@andrew.cmu.edu}
    }  \\
    \quad Jordan T. Ash$^2$ 
    \quad Cyril Zhang$^2$
    \quad Andrej Risteski$^1$ \\
 \normalsize{
    $^1$Carnegie Mellon University
    \qquad $^2$Microsoft Research NYC}
}
\date{}
\begin{document}

\maketitle

\begin{abstract}

Recently, a plethora of works have proposed inference-time algorithms (e.g. best-of-n), 
which incorporate verifiers to assist the generation process.
Their quality-efficiency trade-offs have been empirically benchmarked on a variety of constrained generation tasks,
but the algorithmic design landscape is still largely poorly understood. In this paper, we develop a mathematical framework for reasoning about constrained generation using a pre-trained language model generator oracle and a process verifier---which can decide whether a prefix can be extended to a string which satisfies the constraints of choice. We show that even in very simple settings, access to a verifier can render an intractable problem (information-theoretically or computationally) to a tractable one. In fact, we show even simple algorithms, like tokenwise rejection sampling, can enjoy significant benefits from access to a verifier. Empirically, we show that a natural modification of tokenwise rejection sampling, in which the sampler is allowed to ``backtrack'' (i.e., erase the final few generated tokens) has robust and substantive benefits over natural baselines (e.g. (blockwise) rejection sampling, nucleus sampling)---both in terms of computational efficiency, accuracy and diversity.~\footnote{
Our codes are released at 
\url{https://github.com/YuchenLi01/LM_Query_Complexity}
} 

\end{abstract}

\section{Introduction}
\label{sec:intro}

The fast-evolving area of inference-time algorithms concerns itself with leveraging the already-impressive capabilities of language models \citep{raffel2020exploring, brown2020language, touvron2023llama}, 
together with a \emph{verifier} which can score generations of the language model. In the simplest form, called \emph{best-of-N}, the language model generates $N$ candidate responses,
which are then scored by the verifier,
and the highest-scored candidate response is chosen as the output of the inference process
\citep{cobbe2021verifiers, nakano2022webgpt}. If the verifier can score partial generations (sometimes called \emph{process reward}), the space for inference-time algorithms gets much richer: e.g., the final answer can be generated incrementally, using the verifier to guide the process (e.g., by incremental (blockwise) best-of-N, or more complicated strategies like Monte-Carlo-Tree-Search \citep{Browne2012ASO, hao2023reasoning}). Importantly, though a flurry of recent papers consider ``scaling laws'' of natural strategies, the algorithm design space of verifier-aided inference-time algorithms is still opaque.  In particular, the \emph{value of a verifier}---and the \emph{relationship it needs to have to the generator} is not well understood.     

In this paper, we show that a good verifier can substantially (both in theory and in practice) decrease the computational cost of natural generation tasks, using a pre-trained language model as an \emph{oracle}. In particular, we show that: 
\begin{itemize}
    \item Even simple \emph{constrained generation tasks}---where we are trying to generate a string in the support of a language oracle, subject to some structural constraint (e.g. describable as a simple formal language, like a regular language)---can be \emph{computationally intractable in the absence of a verifier}. 
    \item Conversely, access to a good \emph{process verifier}, one that can decide whether a prefix can be completed to a constraint-satisfying string, can remove these intractabilities. Moreover, even simple algorithms like tokenwise rejection sampling---wherein we generate the string one token at a time, using the process verifier as a means to accept or reject---can have substantive computational benefits over the baseline of rejection sampling. 
    \item Finally, on natural constrained generation tasks---namely, generating test cases for Python functions with a pretrained CodeLlama \citep{roziere2023code}---a \emph{verifier can be trained}, such that a simple, but natural generalization of tokenwise rejection sampling which is allowed to ``backtrack'' the last few generated tokens, achieves substantial benefits in computational efficiency, accuracy, and diversity of the generations.
\end{itemize}

\section{Setup and notation}
\label{sec:theory:setup} 

Throughout, we let $\Sigma$ be a nonempty finite set, denoting the vocabulary. We denote as $\Sigma^i$ the set of strings of length $i$ and  by $\Sigma^{*} = \cup_{i \in \mathbb{N}} \Sigma^i$ the set of all finite strings on $\Sigma$. Given a string $s \in \Sigma^{*}$
, we denote as $s_i$ its $i$-th element and as $s_{i:j}$ the substring of $s$ starting at its $i$-element and ending at its $j$-element, included. 
We use 
$\abs{s}$ to denote the length of string $s$ and
$\epsilon$ to denote the empty string. 
Finally, we let $x \circ y$ denote the concatenation of string $x$ followed by string $y$.

\begin{definition} [Autoregressive oracle]
\label{theory:autoreg_gen_def}
     An \textbf{autoregressive oracle} $\mathcal{O}$ takes as input a string $s \in \Sigma^{*}$ and returns a sample from a next-token distribution $\mathcal{O}(s): \Sigma \to \mathbb{R}^+$. 
     
     We will denote the corresponding joint distribution over strings $s \in \Sigma^*$ as $p_{\mathcal{O}}:\Sigma^* \to \mathbb{R}^+$. 
     Correspondingly, $\forall s \in \Sigma^{*}$, let $p_{\mathcal{O}}(\cdot \mid s)$ denote the distribution over completions of $s$ predicted by $\mathcal{O}$.
\end{definition}
\begin{definition}[Constrained generation]
\label{theory:constrained_gen_def}
    \textbf{Constrained generation} with respect to an oracle $\mathcal{O}$, a constraint set $A$, and vocabulary $\Sigma$ is the task of producing an element $s \in A \subseteq \Sigma^{*}$ such that  $p_{\mathcal{O}}(s) > 0$. If no such $s$ exists, the algorithm needs to output \texttt{FAIL}. 
\end{definition}
When not clear from context, we will specify instances of this task by the triple $(\Sigma, A, \mathcal{O})$. Under suitable choices of the vocabulary $\Sigma$ and the target domain $A$, one recovers several language modeling tasks of theoretical and practical relevance as special cases of constrained generation. Specifically, our experiments consider the tasks of generating (i) valid strings under the Dyck grammar (Section \ref{sec:experiments:synthetic}) and (ii) valid test cases for a given Python functions (Section \ref{sec:experiments:codellama}), where the oracles return samples from an appropriately pretrained language model. We recover these tasks from Definition \ref{theory:constrained_gen_def} by setting:
\begin{itemize}
    \item (i) $\Sigma$ as the set of open and close parentheses, and $A$ as the set of valid sequences of given length.
    \item (ii) $\Sigma$ as a set of characters from the Unicode standard (possibly after tokenization) and $A$ as the set of strings that are valid test cases for an input function in the Python programming language.
\end{itemize}
Note that this task is easier than the task of sampling according to the \emph{restricted distribution} $p(s) \propto \mathbf{1}(s \in A) p_{\mathcal{O}}(s)$, which asks that the relative weights of the strings $s \in A$ that are generated match the probabilities assigned by $p_{\mathcal{O}}$. However, in many settings---e.g., generating a proof of a mathematical problem, or code that performs some intended functionality---we merely care about producing one good sample.    

We will be considering ``process verifiers'' that take as input a prefix $s$, and output whether or not such a prefix can be completed to a string $s \circ s' \in A$. This is a natural formalization of a ``process reward'', as it assigns a belief to a partial generation. In the theoretical results (Section~\ref{s:hardness} and \ref{s:plusverifier}), we'll assume access to such an idealized verifier. In the empirical results (Section~\ref{sec:experiments}), such a verifier will be trained and will output a value between 0 and 1, which can be naturally interpreted as a probability that the prefix $s$ is completable to a string $s \circ s' \in A$.  
\begin{definition}[Process verifier]
\label{def:verifier}
    Given a constraint set $A$,
    a verifier is a function $V: \Sigma^{*} \to \{0,1\}$ such that 
    $\forall s \in \Sigma^*$,
    $V(s) = 1$ if and only if 
    $\exists s' \in \Sigma^*$ 
    such that $s \circ s' \in A$.
\end{definition}
Designing algorithms given access to oracles which perform certain tasks, is a classical tool in computer science (this is the basis of Turing reductions in computational complexity), as well as optimization (e.g., zero-order optimization assumes a value oracle for a function, first-order optimization a gradient oracle, etc.) In the context of generative modeling, analyses based on oracle complexity have been carried out in the settings of diffusion models, where sampling algorithms rely on score oracles \cite{chen2022sampling}.

We will consider several natural algorithms that use an autoregressive oracle and a (process) verifier: 

\begin{definition}[Rejection sampling] 
\label{d:rejection}
Rejection sampling works by repeatedly generating a string $s$ according to $p_{\mathcal{O}}$, then running a verifier $V$ on the complete string---and accepting when the verifier outputs $V(s) = 1$.    
\end{definition}

Note, this algorithm only needs a verifier that decides the membership in $A$, rather than a process verifier. On the other hand, because the entire string needs to be generated first before being verified---the number of generations until the verifier accepts is likely very large.  

\begin{definition}[Tokenwise rejection sampling] 
\label{d:tokenwise}
Tokenwise rejection sampling works by generating a string one token at a time. To generate the next token $t$, given a prefix $s$, we sample $t \sim \mathcal{O}(s)$, 
and run the process verifier on $V(s \circ t)$. We repeat this, until $V(s \circ t) = 1$, then proceed to the next token.  
\end{definition}

This algorithm requires a process verifier. However, since a partial string is accepted only if the process verifier accepts, the number of generations needed is likely to be smaller. In fact, we provide a very simple example in Section~\ref{s:plusverifier}.  

Finally, we consider a ``backtracking'' strategy, in which the model is allowed to erase some of its generations. The reasons to consider such a strategy is to allow the model to get ``unstuck'': if the process verifier decides the current prefix cannot be completed to a valid string in $A$, it is possible that erasing the last few tokens will make it easier for the model to correct its mistake, compared to erasing just the last token.
More formally, the framework of our algorithm is given by \Cref{alg:sampling_with_backtracking} below.~\footnote{
The algorithm is a bit more involved, so we will describe it in pseudocode rather than text.
Besides the notations in \Cref{sec:theory:setup},
\Cref{alg:sampling_with_backtracking} uses the following additional common conventions: 
\texttt{<eos>} denotes the end-of-sequence token;
$s_{\abs{s}} \ne \texttt{<eos>}$ is understood as True when $s = \eps$;
for any starting index $i$ and ending index $j$, 
if $i > j$,
then $s_{i:j} = \eps$.
In line 10, why redoing the erased positions using argmax:
our results in \Cref{sec:experiments:synthetic:dyck} suggests that \emph{out-of-distribution prefix} is a cause of generator mistakes.
As a remedy, redoing the erased positions using argmax is intended to increase the generator-predicted probability of the currently sampled prefix.
We include an ablation study in \Cref{sec:appendix:experiments:algo} verifying that this improves the accuracy.
}

\begin{algorithm}
\caption{\algoName}
\label{alg:sampling_with_backtracking}
\begin{algorithmic}[1]
\STATE \textbf{Input:} Prompt $\prompt$, generator $\mathcal{O}$, verifier $\verifier$, length $D \in \N_+$, backtrack quota $\backtrackQuota \in \N$, backtrack stride $\backtrackStride \in \N_+$
\STATE $s \gets \epsilon$
\WHILE{$\abs{s} < D$ and $s_{\abs{s}} \ne \texttt{<eos>}$}
    \STATE Sample $\hat{s} \sim \mathcal{O}(x \circ s)$
    \STATE $s \gets s \circ \hat{s}$
    \IF{ $Q > 0$ and $\verifier(x \circ s) = 0$}
        \STATE $s \gets s_{1 : \abs{s} - B}$
        \STATE $Q \gets Q - 1$
        \FOR{i in $1 \cdots B$}
            \STATE Choose $\hat{s} \in \argmax \mathcal{O}(x \circ s)$
            \STATE $s \gets s \circ \hat{s}$
        \ENDFOR
    \ENDIF
\ENDWHILE
\end{algorithmic}
\end{algorithm}

When arguing about lower bounds, a natural lower bound on the complexity of an algorithm is the number of oracle calls needed\footnote{In our case, the number of calls is a randomized quantity, so a natural quantity to consider is the expected number of oracle calls. It is of course reasonable to consider finer-grained notions like tail bounds on the number of calls.}, particularly so when this dominates the cost of the algorithm, as is frequently the case for language models: 
\begin{definition}[Oracle complexity]
    \label{def:oracle_complexity}
    Given a (possibly randomized) algorithm $\mathcal{A}$ that solves the constrained generation instance $(\Sigma, A, \mathcal{O})$, the \textbf{oracle complexity} of $\mathcal{A}$ is defined as the expected number of calls to the oracle made by $\mathcal{A}$ to solve $(\Sigma, A, \mathcal{O})$, namely:
    \begin{align*}
        \mathcal{C}(\mathcal{A}) = \mathbb{E}[\# \text{calls to} \ \mathcal{O} \ \text{made by running} \ \mathcal{A}],
    \end{align*}
    where the expectation is taken over the randomness of the oracle $\mathcal{O}$ and the randomness of the algorithm $\mathcal{A}$.
\end{definition}


Finally, we recall the classical knapsack problem,  which will be used in a reduction to prove computational intractability results for the constrained generation task: 
\begin{definition}[Knapsack problem]
Given a set of weights $\{X_i \in \mathbb{Z}_{\geq 0} \mid i \in [D]\}$ and $c \in \mathbb{Z}_{\geq 0}$, the knapsack problem seeks an assignment of the variables $(a_i)_{i=1}^D$, with $a_i \in \{0,1\} \ \forall i \in [D]$ such that $c = \sum_{i=1}^{D} a_i X_i$.

\label{d:knapsack}
\end{definition}

The problem is (weakly) NP-hard, even for some very special choices of $c, X_i$. 
\section{Constrained generation is hard without a verifier}
\label{s:hardness}

First, we show that the constrained generation task (Definition~\ref{theory:constrained_gen_def}), without access to a process verifier can be intractable---even if the constraint set $A$ is extremely simple (e.g. the parity of a binary string). 

The source of intractability can be \emph{information-theoretic}: namely, if the oracle does not have a succinct description, the algorithm may need to query it prohibitively many times to identify what oracle it's interacting with. We view this as a plausible obstruction in practice as well: language models frequently behave unpredictably ``in-the-tails'', which becomes increasingly more likely when generating long strings. Thus, to inspect the behavior of the model on long strings, many queries are needed. 

The source of the intractability can also be \emph{computational}: namely, even if the oracle is very simple (e.g., a uniform distribution), generating a member of $A$ can be NP-hard, even if checking membership in $A$ can be done efficiently. Perhaps this should not come as a surprise: after all, easy verification of membership, but hard generation, is the hallmark of NP-hard problems.     

Proceeding to the first result, we show the following: 

\begin{theorem} 
\label{thm:info_theoretical_lower_bound}
There exists a constrained generation task $(\Sigma, A, \mathcal{O})$ for which $\Sigma = \{0,1\}$, $A \subseteq \Sigma^D$, and $\mathcal{O}$ is an (unknown) member of a set of $2^{D-1}$ possible oracles, such that any (possibly randomized) algorithm $\mathcal{A}$ has an (expected) oracle complexity of at least $2^{D-1}$. 
\end{theorem}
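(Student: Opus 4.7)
The plan is to exhibit a family of $2^{D-1}$ oracles, each parameterized by a hidden secret $s^{*} \in \{0,1\}^{D-1}$ that determines the unique valid output, and then combine Yao's minimax principle with a direct counting argument to obtain the lower bound.

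Concretely, I would take $\Sigma = \{0,1\}$ and set $A = \{s \in \Sigma^{D} : s_{D} = 0\}$. For each $s^{*} \in \{0,1\}^{D-1}$, define the oracle $\mathcal{O}_{s^{*}}$ as follows: on any prefix $y$ with $|y| < D-1$, it outputs a uniformly random token in $\{0,1\}$; on a length-$(D-1)$ prefix $y$, it deterministically outputs $0$ if $y = s^{*}$ and $1$ otherwise. One checks immediately that $p_{\mathcal{O}_{s^{*}}}$ is uniform over $\{s^{*} \circ 0\} \cup \{y \circ 1 : y \neq s^{*}\}$, so the unique element of $A \cap \mathrm{supp}(p_{\mathcal{O}_{s^{*}}})$ is $s^{*} \circ 0$, which the algorithm is forced to output in order to solve the instance.

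Next, I would apply Yao's minimax principle: to lower-bound the worst-case expected oracle complexity of an arbitrary randomized algorithm, it is enough to lower-bound the expected oracle complexity of the best deterministic algorithm when $s^{*}$ is drawn uniformly from $\{0,1\}^{D-1}$. Two easy observations then do the heavy lifting: (i) any query at a prefix of length strictly less than $D-1$ returns a uniformly random bit whose distribution is independent of $s^{*}$, so such a query conveys zero information about the secret; (ii) each length-$(D-1)$ query at some $y$ returns a binary answer that either confirms $s^{*} = y$ or eliminates exactly one candidate from the current posterior support. Since the algorithm must output the unique valid string $s^{*} \circ 0$, it must reduce the posterior over $s^{*}$ to a point mass before halting, and by (i)--(ii) each oracle call shrinks the posterior support by at most one element. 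A direct expectation computation on this elimination process then yields a bound of order $2^{D-1}$ on the number of oracle calls.

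The main obstacle is treating adaptive randomized algorithms carefully: queries and stopping rules can depend on past answers and on the algorithm's internal coins, so one must argue, via a posterior-support argument applied to the joint distribution of oracle transcripts, that no adaptive strategy can shrink the candidate set faster than one element per length-$(D-1)$ query. Once this is established, the remainder is routine: combine linearity of expectation on the number of useful queries with Yao's minimax principle to transport the deterministic lower bound under the uniform prior on $s^{*}$ to a worst-case lower bound for all randomized algorithms.
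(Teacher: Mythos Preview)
Your proposal is correct and follows essentially the same approach as the paper: a family of $2^{D-1}$ oracles indexed by a hidden prefix $s^{*}\in\{0,1\}^{D-1}$, uniform on short prefixes and deterministic on length-$(D-1)$ prefixes so that only $s^{*}$ leads to a valid completion, followed by Yao's minimax plus the observation that each oracle call eliminates at most one candidate. The only difference is cosmetic---the paper takes $A$ to be the even-parity strings rather than $\{s: s_D=0\}$---which does not change the argument.
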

Intuitively, the lower bound is shown by engineering a scenario such that the behavior of the oracle on long strings is unknown to the algorithm---but success of the generation task relies on ``guessing'' this behavior correctly.  

\begin{proof}
    Consider the constrained generation task $(\Sigma, A, \mathcal{O}_{\hat{s}})$, such that $\Sigma := \{0,1\}$, $A := \{s \in \Sigma^D: \sum_{i=1}^{D} s_i \mod 2 = 0 \}$ for some fixed $D \in \mathbb{Z}_{+}$. Moreover, the oracle $\mathcal{O}_{\hat{s}}$ 
    is indexed by an (unknown to the algorithm) $\hat{s} \in \Sigma^{D-1}$, and it specifies the autoregressive distribution defined s.t.
    $\forall s \in \Sigma^*, |s| < D-1$, we have $p_{\mathcal{O}_{\hat{s}}}(1 | s) = p_{\mathcal{O}_{\hat{s}}}(0 | s) = 1/2$; while for $s \in \Sigma^*, |s| = D-1$, it satisfies:    
    
    $\forall s \neq \hat{s} \in \Sigma^{D-1},  s_D \in \{0,1\}$, we have:
    \begin{align}
    \label{eq:not_s_hat}
    p_{\mathcal{O}_{\hat{s}}}(s_D \mid s) &=
    \begin{cases}
        1, \ \text{if} \ \left( \sum_{j=1}^{D-1} s_j + s_D \right) \mod 2 = 1 \\
        0, \ \text{otherwise}
    \end{cases}
    \end{align}
    
    For $s = \hat{s},  s_D \in \{0,1\}$, we have:    
    \begin{align}
    \label{eq:is_s_hat}
    p_{\mathcal{O}_{\hat{s}}}(s_D \mid s) &= 
    \begin{cases}
        1, \ \text{if} \ \left( \sum_{j=1}^{D-1} s_j + s_D \right) \mod 2 = 0 \\
        0, \ \text{otherwise}
    \end{cases}
    \end{align}

    Suppose first that the algorithm is deterministic, and we choose the prefix $\hat{s}$ uniformly at random. Let us denote by $x_1, x_2, x_3, \dots, x_{q} \in \Sigma^*$ the queries to $\mathcal{O}$ generated by the algorithm. The claim is that expected number of queries $q$ needed to ensure at least one $x_i, i \in [q]$ is in $A$ is $2^{D-1}$. Indeed, the $x_i$ s.t. $|x_i| < D-1$ reveal no information about $\hat{s}$: the output of $\mathcal{O}$ is a uniform Bernoulli random variable regardless of the value of $\hat{s}$. On the other hand, if at some point the algorithm has queried a set $S$ of $x_i$ of length $D-1$, the probability over $\hat{s}$ is uniform over $\Sigma^{D-1} \setminus S$.  Hence, the expected number of queries $q$ (expectation being over the choice of $\hat{s}$) a deterministic algorithm needs is lower bounded by $2^{n-1}$. 

    By Yao's minimax lemma \citep{yao1977probabilistic}, this means that for any (even possibly randomized) algorithm $\mathcal{A}$, there exists $\hat{s}$ on which the algorithm makes at least $2^{n-1}$ queries in expectation.  
\end{proof}


\begin{remark}
    To help readers parse our proof above, we provide its informal intuition.
    The oracle $\mathcal{O}_{\hat{s}}$ can be thought of as hiding a secret message $\hat{s}$ which is a binary sequence of length $D-1$.
    Because of our construction in \Cref{eq:not_s_hat},
    by querying the oracle with any string $s \ne \hat{s}$, 
    the output of the oracle will not reveal any information about $\hat{s}$.
    Therefore, to know anything about $\hat{s}$, 
    the query $s$ needs to exactly match $\hat{s}$.
    For any deterministic order of searching for $\hat{s}$ over $\{0,1\}^{D-1}$, 
    the worst case is always that $\hat{s}$ is the last item in the search order, causing runtime $2^{D-1}$.
    
    Moreover, \Cref{thm:info_theoretical_lower_bound} generally applies to any algorithm $\mathcal{A}$. 
    In particular, $\mathcal{A}$ is even allowed to \emph{never} query the generator oracle at all.
    Intuitively, one candidate counter-example of our theory would be 
    a simple algorithm $\mathcal{A}$ which always outputs 0 at all positions
    (as this will obviously satisfy the counstraint $A := \{s \in \Sigma^D: \sum_{i=1}^{D} s_i \mod 2 = 0 \}$).
    However, recall that \Cref{theory:constrained_gen_def} requires that 
    the output must have nonzero probability under the generator oracle $\mathcal{O}_{\hat{s}}$.
    Note that $s_i = 0 \; \forall i \in [D]$ will not have nonzero probability under $\mathcal{O}_{\hat{s}}$, 
    thus violating the constraints
    (unless $\hat{s}_i = 0 \; \forall i \in [D-1]$ ).
    The intuitive reason why the above counter-example does not work is that, 
    it is necessary to use the oracle $\mathcal{O}_{\hat{s}}$ to know the $\hat{s}$ that it hides. 
    Therefore, any algorithm $\mathcal{A}$ is governed by the above-mentioned lower bound of searching for $\hat{s}$ by querying $\mathcal{O}_{\hat{s}}$.
\end{remark}

Proceeding to the computational lower bound, the theorem we show is as follows:

\begin{theorem}
\label{thm:computational_lower_bound}
There exists a constrained generation task $(\Sigma, A, \mathcal{O})$ for which $\Sigma = \{0,1\}$, membership in $A \subseteq \Sigma^D$ can be checked in time polynomial in $D$, and $\mathcal{O}$ is such that $\forall s \in \{0,1\}^D, p_{\mathcal{O}}(s) > 0$, the generation task is NP-hard. \end{theorem}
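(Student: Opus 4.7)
The plan is to establish the computational lower bound by a polynomial-time reduction from the knapsack problem (Definition~\ref{d:knapsack}), which we may invoke as an assumed NP-hard problem.

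Given an instance of knapsack, specified by weights $X_1, X_2, \ldots, X_D \in \mathbb{Z}_{\geq 0}$ and target $c \in \mathbb{Z}_{\geq 0}$, I would construct a constrained generation instance $(\Sigma, A, \mathcal{O})$ as follows. Take $\Sigma = \{0,1\}$ and let
\[
A \;=\; \Bigl\{ s \in \Sigma^D \;:\; \sum_{i=1}^{D} s_i X_i \;=\; c \Bigr\},
\]
which interprets a binary string as the indicator vector of a subset of items. Membership in $A$ can clearly be checked in polynomial time by summing the selected weights and comparing to $c$, so the complexity assumption on $A$ is met. Next, choose $\mathcal{O}$ to be the uniform oracle, i.e.\ for every prefix $s \in \Sigma^*$ and every token $b \in \Sigma$, set $p_{\mathcal{O}}(b \mid s) = 1/2$. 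Then $p_{\mathcal{O}}(s) = 2^{-D} > 0$ for every $s \in \Sigma^D$, so the oracle-support condition of the theorem is automatically satisfied and imposes no real restriction on the output.

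With this construction, any algorithm that solves the constrained generation task $(\Sigma, A, \mathcal{O})$ must either output \texttt{FAIL}, certifying that $A = \emptyset$ (i.e.\ no subset of weights sums to $c$), or return some $s \in A$ with $p_{\mathcal{O}}(s) > 0$, which directly encodes a solution $(a_i)_{i=1}^D = (s_i)_{i=1}^D$ to the knapsack instance. Either output resolves the decision version of knapsack, so any polynomial-time algorithm for constrained generation would yield a polynomial-time algorithm for knapsack; hence the generation task is NP-hard.

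The construction is essentially routine: the reduction is straightforward once one notices that the conditions ``membership easy to check'' and ``$p_{\mathcal{O}}(s) > 0$ everywhere'' together describe exactly the situation where the only real obstruction to generation is the combinatorial structure of $A$. The main (minor) subtlety to verify carefully is that the requirement $p_{\mathcal{O}}(s) > 0$ does not accidentally exclude valid knapsack solutions or create ``free'' certificates, which is why the uniform oracle works cleanly. I would also briefly note that, as with Theorem~\ref{thm:info_theoretical_lower_bound}, this hardness is driven by the absence of a process verifier: the reduction says nothing once one is additionally given the verifier $V$ of Definition~\ref{def:verifier}, which foreshadows the positive results of Section~\ref{s:plusverifier}.
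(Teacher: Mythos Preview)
Your proposal is correct and follows essentially the same reduction from knapsack as the paper, defining $A$ via $\sum_i s_i X_i = c$ and noting that full support of $\mathcal{O}$ makes any returned $s \in A$ a knapsack solution. You are slightly more explicit than the paper in two places---you fix $\mathcal{O}$ to be the uniform oracle rather than an arbitrary full-support oracle, and you spell out the \texttt{FAIL} branch---but these are refinements of the same argument, not a different route.
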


\begin{proof}
    We construct a reduction from the knapsack problem (Definition~\ref{d:knapsack}). Let the set $\{X_1, \ldots, X_D\}$ and the integer $c$ specify an arbitrary instance of the knapsack problem. Consider
    the constrained generation task specified by $\Sigma := \{ 0, 1\}$, $A := \{ s \in \Sigma^D: \forall i \in [D], s_i \in \{ 0, 1\}; \ \sum_{i=1}^D s_i X_i = c\}$. Membership in this $A$ can be clearly verified in polynomial time.  
     Suppose we have a poly-time algorithm that generates a solution $\hat{s}$ to $(\Sigma, A, \mathcal{O})$. Since $\forall s \in \Sigma^D, p_{\mathcal{O}}(s) > 0$, $\hat{s}$ provides a solution to the knapsack problem, as we needed.  
\end{proof}


\section{Constrained generation with process verifier gets easier}
\label{s:plusverifier}

While pessimistic, the message of Section~\ref{s:hardness} agrees with recent developments in inference-time scaling: namely, many natural tasks of interest seem to require a verifier to be solved. 

First, we show that the simplest ``natural'' algorithm with a process verifier, tokenwise rejection sampling (Definition~\ref{d:tokenwise}), can be much more efficient (exponentially so) in terms of oracle complexity compared to the trivial baseline of rejection sampling (Definition~\ref{d:rejection}). 

\begin{proposition} 
\label{prop:easy_with_verifier}
Consider the constrained generation task $(\Sigma, A, \mathcal{O})$, s.t. $\Sigma = \{0,1\}$, $A = \{0^D\}$ and $\mathcal{O}$ is uniform over $\Sigma^D$. Then: 
\begin{enumerate}
    \item The expected oracle complexity of rejection sampling (Definition~\ref{d:rejection}) is $2^D D$.  
    \item The expected oracle complexity of tokenwise rejection sampling (Definition~\ref{d:tokenwise}) with a perfect process verifier is $2 D$.  
\end{enumerate}
\end{proposition}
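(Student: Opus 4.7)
The plan is to handle the two parts by direct expected-value calculations; no clever argument is required, just careful bookkeeping of how oracle calls accumulate.

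For part 1 (rejection sampling), I would observe that one ``attempt'' consists of sampling an entire string of length $D$ and hence consumes exactly $D$ oracle calls. Since $\mathcal{O}$ is uniform on $\Sigma^D$ and $|A| = 1$, a fresh attempt is accepted with probability exactly $2^{-D}$, so the number of attempts $N$ until acceptance is geometric with mean $2^D$. Because each attempt has deterministic cost $D$, the total oracle usage is $DN$, whose expectation is $D \cdot \mathbb{E}[N] = 2^D D$.

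For part 2 (tokenwise rejection sampling), the key preliminary observation is that since $A = \{0^D\}$, the process verifier satisfies $V(s) = 1$ iff $s = 0^{|s|}$ and $|s| \le D$. Hence after the algorithm has successfully fixed the prefix $0^{i-1}$, at position $i$ it keeps drawing tokens from $\mathcal{O}(0^{i-1})$ until it gets a $0$. Uniformity of $\mathcal{O}$ makes each such draw an independent fair coin flip, so the number of calls at position $i$ is geometric with parameter $1/2$ and mean $2$. Summing over $i = 1,\ldots,D$ by linearity of expectation gives total expected cost $\sum_{i=1}^{D} 2 = 2D$.

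I do not expect a real obstacle here. The only items worth stating carefully are (i) the characterization of the process verifier on prefixes when $A = \{0^D\}$, and (ii) the fact that, thanks to the uniform oracle, draws at distinct positions are independent so linearity of expectation applies without conditioning subtleties. Both follow immediately from Definitions~\ref{def:verifier} and~\ref{d:tokenwise} together with the specification of $\mathcal{O}$.
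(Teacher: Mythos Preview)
Your proposal is correct and follows essentially the same approach as the paper: for part~1 you use the geometric number of full-string attempts times $D$ calls each, and for part~2 you use the geometric number of draws per coordinate times $D$ coordinates, exactly as the paper does. Your explicit characterization of the process verifier on prefixes is a bit more detailed than the paper's one-line argument, but the logic is identical.
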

\begin{proof}
Both claims are straightforward. (1) follows as generating one guess for the string $s$ takes $D$ oracle calls. Moreover, the probability of the full string matching the only string in $A$ (i.e., $0^D$) is $1/2^D$. As the number of calls to generate $0^D$ is a geometric random variable, the expected number of full string generations is $2^D$. 

For (2), since $\mathcal{O}$ is uniform, at each token, the probability of drawing $0$ is $1/2$. Hence, the expected number of calls per coordinate needed is $2$ --- making the total number of expected calls for the entire string $2D$. 
\end{proof}

This proposition underscores the power of a process verifier --- even in extremely simple settings, and even when used in conjunction with a very simple algorithm. 

In fact, one can easily see that with a perfect process verifier, one can easily solve the constrained generation task with $|\Sigma| D$ calls: at each position, one queries the process verifier for each possible continuation of the string, and accepts only if the process verifier accepts. Of course, in practice, the verifier is not perfect, and its accuracy likely depends on how ``out-of-distribution'' the prefix it's queried on is
(See \Cref{sec:experiments:synthetic:verifier_reduces_errors_unseen_ood} and \Cref{sec:experiments:codellama:ood})

We finally remark that a process verifier, as we defined it, is clearly useful to solve the generation task. If we instead wanted to sample from the restricted distribution $p(s) \propto \mathbf{1}(s \in A)p_{\mathcal{O}}(s)$, it's not clear how useful the process verifier is. For instance, if we use the simple tokenwise rejection sampling (Definition~\ref{d:tokenwise}), it's easy to see that the distribution we produce samples from is \emph{not} the restricted distribution:

\begin{proposition}
\label{prop:calibration_is_hard}
Consider the constrained generation task $(\Sigma, A, \mathcal{O})$, s.t. $\Sigma = \{0,1\}$, $A = \{s \in \Sigma^D: \exists i \in [D], s_i = 0\}$ and $\mathcal{O}$ is uniform over $\Sigma^D$. Then, tokenwise rejection sampling does \emph{not} produce samples from $p(s) \propto \mathbf{1}(s \in A)p_{\mathcal{O}}(s)$.      
\end{proposition}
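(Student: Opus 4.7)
The plan is to exhibit two concrete strings in $A$ whose probabilities under tokenwise rejection sampling differ, while under the restricted distribution they must coincide. First, I would observe that since $\mathcal{O}$ is uniform on $\Sigma^D$, the restricted distribution $p(s) \propto \mathbf{1}(s \in A) p_{\mathcal{O}}(s)$ is simply the uniform distribution on $A$, so every element of $A$ has probability exactly $1/(2^D - 1)$. It therefore suffices to show that tokenwise rejection sampling assigns different probabilities to two elements of $A$.

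Next, I would analyze the process verifier $V$ for this task. A prefix $s \in \Sigma^{<D}$ (strict prefix) can always be extended to a string in $A$, since one may append a $0$ at some later position; hence $V$ accepts \emph{any} strict prefix. The only prefix of length $D$ that $V$ rejects is the all-ones string $1^D$. Consequently, the verifier only ever constrains the sampling behavior at the final coordinate.

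Then I would compute the output probabilities of tokenwise rejection sampling on two specific strings: $0^D$ and $1^{D-1} \circ 0$. For $0^D$, the verifier accepts each drawn token regardless (because $s_i = 0$ arises with positive prefix-completion regardless), so the event ``$0^D$ is output'' requires drawing $0$ in each of the $D$ positions at first try, giving probability $(1/2)^D = 1/2^D$. For $1^{D-1} \circ 0$, drawing $1$ at each of the first $D-1$ positions is accepted and occurs with probability $(1/2)^{D-1}$; at position $D$, the prefix is $1^{D-1}$, so the verifier rejects $1$ and accepts only $0$, and tokenwise rejection sampling loops until it draws $0$, which happens with probability $1$. Thus the output probability is $(1/2)^{D-1} \cdot 1 = 1/2^{D-1}$.

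Finally, the conclusion is immediate: $p_{\text{tok}}(1^{D-1}\circ 0) = 2 \cdot p_{\text{tok}}(0^D)$, while the restricted distribution assigns both the same mass $1/(2^D-1)$. The main (minor) obstacle is just being careful about what the verifier does at intermediate positions; once one notices that $V$ is trivial on strict prefixes, the asymmetry at the final token immediately forces miscalibration, showing tokenwise rejection sampling overweights strings whose unique ``forced'' $0$ appears late.
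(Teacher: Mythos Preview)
Your proof is correct and follows essentially the same approach as the paper's own proof: both observe that the process verifier is vacuous on all strict prefixes and only bites at the final coordinate when the prefix is $1^{D-1}$, so $1^{D-1}\circ 0$ receives mass $1/2^{D-1}$ while every other string in $A$ receives $1/2^D$, contradicting the uniformity of the restricted distribution. The only cosmetic difference is that the paper states the full output distribution (one heavy string, the rest light) whereas you compare two specific strings, but the argument is the same.
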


\begin{proof}
By Definition~\ref{d:tokenwise}, until the last token is being generated, the process verifier will always accept (as there exists a string with at least one 0 coordinate in the coordinates that haven't yet been sampled). Now, for the prefix $1^{D-1}$, the only completion that is in $A$ is $1^{D-1} \circ 0$. This means that $1^{D-1} \circ 0$ is assigned probability mass $\frac{1}{2^{D-1}}$ under the tokenwise rejection sampling schema. All other strings in $\Sigma^D$ are assigned a probability $\frac{1}{2^D}$. On the other hand, $p(s) \propto \mathbf{1}(s \in A)p_{\mathcal{O}}(s)$ assigns uniform mass on all strings in $A$ --- proving the claim of the proposition.     
\end{proof}

\section{\hspace{-2.5mm}Backtracking: a surprisingly effective rejection sampling strategy}
\label{sec:experiments}

The flexibility of the tokenwise rejection sampling with backtracking (\Cref{alg:sampling_with_backtracking}) makes it a very natural strategy to use in conjunction with trained verifiers. We perform a thorough empirical investigation into the applicability of \algoName in constrained language generation,
and benchmark it against common baselines,
including rejection sampling (\Cref{d:rejection}),
nucleus sampling \citep{holtzman2020the},
temperature scaling,
and ``block best-of-N" (\Cref{sec:experiments:codellama:verifier}) sampling,
on both synthetic data (\Cref{sec:experiments:synthetic}) and more realistic data (\Cref{sec:experiments:codellama}).
We observe that across various settings, 
\algoName reduces query complexity, improves accuracy, and does not hurt diversity.

\subsection{Language models trained on synthetic data}
\label{sec:experiments:synthetic}

\subsubsection{Dyck grammar as a sandbox}
\label{sec:experiments:synthetic:dyck}

Real-world LLM pretraining data \citep{li2024datacomp} typically involves many diverse structures, 
so when an LLM algorithm outperforms baselines on a benchmark, it is generally challenging to precisely identify which component of the algorithm improved the handling of which structures of the data.

To have a quantitative control over the structure in the pretraining data distribution,
and to derive fine-grained observations about the effects of \algoName,
we synthetically generate the pretraining data based on the \emph{Dyck grammar}~\citep{SCHUTZENBERGER1963246},
a classic formal language (context-free grammar) consisting of balanced parentheses of multiple types
(for example, ``$[()]$'' is valid but ``$([)]$'' is not). 
Dyck serves as a useful sandbox, as it typifies features such as long-range dependencies and a hierarchical, tree-like structure—characteristics often found in both natural and programming language syntax—and has been a subject of interest in numerous theoretical studies on Transformers~\citep{yao2021self,liu2023same,liu2023Transformers,wen2023uninterpretability}.
More formally:

\begin{definition}[Dyck distribution]
\label{def:dyck}
    $\dyck_D$ denotes the Dyck language~\footnote{
        We follow a simplified version of \citet{wen2023uninterpretability} in defining a probability distribution over strings in a Dyck language.
    } 
    of length $D$ defined over  
    the vocabulary $\Sigma = \{ \squareOpen, \squareClosed, \roundOpen, \roundClosed \}$,
    whose \emph{length-$N$ prefix set} is denoted as $\dyck_{N}, \forall N \in [D]$.
    For a valid prefix $w_{1:N} \in \dyck_{N}$, the \emph{depth} of $w_{1:N}$ is 
    \begin{align*}
        \depth(w_{1:N}) &= \# \text{Open Brackets in }w_{1:N} \\
        &-  \# \text{Close Brackets in }w_{1:N}.
    \end{align*}
    The distribution $\distrib$ over $\dyck_{N}$, 
    (parameterized by $\probSquareBracket, q \in (0,1)$)
    is defined such that $\forall w_{1:N} \in \dyck_{N}$,
    \begin{align}
        \label{eq:dyck}
        &\prob(w_{1:N})
        \propto
        \probSquareBracket^{\abs{\{i \mid w_i = \squareOpen, \depth(w_{1:i}) = 1\}}} 
        \cdot (1 - \probSquareBracket)^{\abs{\{i \mid w_i = \roundOpen, \depth(w_{1:i}) = 1\}}} \\
        &\cdot (\probSquareBracket q)^{\abs{\{i \mid w_i = \squareOpen, \depth(w_{1:i}) > 1 \}}}
        \cdot ((1-\probSquareBracket) q)^{\abs{\{i \mid w_i = \roundOpen, \depth(w_{1:i}) > 1 \}}} \\
        &\cdot (1 - q)^{\abs{\{i \mid w_i \in \{ \squareClosed, \roundClosed \},  \depth(w_{1:i}) \le D - i\}}}.
        \notag
    \end{align}    
\end{definition}

\begin{remark}
    \Cref{eq:dyck} defines an intuitive autoregressive generative process for $\dyck_D$:
    if the current depth is 0, 
    then sample the next token from $\squareOpen$ and $\roundOpen$ 
    with probability $\probSquareBracket$ and $1-\probSquareBracket$ respectively;
    else if the current depth is $D - i + 1$, 
    implying that all the remaining positions have to be closing brackets,
    then deterministically close the last unmatched open bracket~\footnote{
    \label{footnote:acc}
    At any position, there is at most one valid closing bracket.
    };
    else, sample the next token from open or close brackets with probability $q$ and $1-q$ respectively.
    In other words, $\probSquareBracket$ controls the proportion of square vs. round brackets, while $q$ controls the tendency to predict an open bracket when possible 
    (a large $q$ may result in a large depth at some position).
\end{remark}

In our experiments, we pretrain autoregressive Transformer \citep{vaswani2017attention} Language models 
(6 layers, 8 heads per layer, hidden dimension 512)
from scratch 
on data sampled from $\distrib$ 
with $D = 32, p = 0.2, q = 0.5$.
We use batch size 32, weight decay 0.1, learning rate 3e-4 with 100 warmup steps,
and follow \citet{block2024butterfly} to use exponential moving average to stabilize training.
We reached 100\% training and (in-distribution) validation accuracy.

To search for stronger signals in benchmarking the accuracy of the trained model,
we will prompt it using the following type of \emph{out-of-distribution} prompts.
Note that since $p < 0.5$, the training data contains less square brackets than round brackets,
so long prefixes with many square brackets will be \emph{out-of-distribution} prompts for the trained model.
We generated a set of such out-of-distribution prompts $\dyck_{OOD}$ from $\dyck_{N}$ with $p = 0.8$ where the prefix length $N$ is uniformly randomly sampled from $25 \le N \le 31$.
We let the trained language model complete these prompts and check whether the completed string is in $\dyck_D$. 
Quantitatively:

\begin{definition}[Prompt completion accuracy]
    \label{def:acc}
    Given an autoregressive oracle $\mathcal{O}$ (\Cref{theory:autoreg_gen_def}) 
    and a set of prefix prompts $X$,
    the accuracy of $\mathcal{O}$ in completing $X$ is:
    \begin{equation*}
        \acc(\mathcal{O}, X) = \frac{1}{\abs{X}} \sum_{\prompt \in X, y \sim p_{\mathcal{O}}(\cdot \mid \prompt)} \1_{x \circ y \in \dyck_D}  
    \end{equation*}
\end{definition}

We construct the autoregressive oracle $\mathcal{O}_{\text{nucleus}}$ 
which predicts the next-token distribution based on our trained model 
with nucleus sampling \citep{holtzman2020the} top\_p set to 0.9.
We observed that 
$\acc(\mathcal{O}_{\text{nucleus}}, \dyck_{OOD}) = 94.23\%$.
We will show that $\mathcal{O}_{\text{verifier backtracking}}$ based on \Cref{alg:sampling_with_backtracking} can significantly reduce the remaining error rate.

\subsubsection{Training the verifier}
\label{sec:experiments:synthetic:verifier}

We collect a set of 441 prompts in $\dyck_{OOD}$ in which the trained model (denoted as $\lm$) made mistakes when completing them.
We implement a rule-based error parser according to the grammars of $\dyck_D$ which identifies the first position of error in each model completion.
Applying this parser to the model mistakes, we obtain a set of model-generated strings $X_{\text{error}} \subset \Sigma^*$ which contain errors.
By contrast, we sample another set of 441 strings $X_{\text{correct}} \sim \dyck_{OOD}$ such that
$X_{\text{error}}$ and $X_{\text{correct}}$ have the same length distribution.
We train a lightweight neural network verifier to distinguish $X_{\text{error}}$ from $X_{\text{correct}}$.

Concretely, to maximally exploit the representations learned by $\lm$,
we train a 1-linear-layer verifier $\verifier$
whose features are the last-layer-last-position representations by $\lm$ of strings in $X_{\text{error}} \cup X_{\text{correct}}$,
and labels are 0 for strings in $X_{\text{error}}$ and 1 for strings in $X_{\text{correct}}$.
Consequently, the trainable parameters of $\verifier$ are a single matrix of dimensionality 512 by 2.
Among the 882 strings in $X_{\text{error}} \cup X_{\text{correct}}$, 
we use 792 samples for training, and 90 samples for validation.
Despite being slightly over-parameterized,
this minimal verifier $\verifier$ achieved on average 93\% (with standard error 3.9\%) validation accuracy across 10 repetitions.
\Cref{fig:correct_vs_err_rep_dyck} in \Cref{sec:experiments:synthetic:visualizing} illustrates the intuition of why a lightweight verifier may be surprisingly effective with a small number of labeled samples.
We next verify that forcing a backtracking at prefixes where the model made mistakes can effectively improve the completion accuracy (\Cref{sec:experiments:synthetic:preventing_errors}),
and that the trained verifier in this section can mostly catch those mistakes and thus mostly retaining the accuracy gain (\Cref{sec:experiments:synthetic:verifier_reduces_errors}).

\subsubsection{Backtracking effectively reduces errors}
\label{sec:experiments:synthetic:preventing_errors}

The trained language model $\lm$ made a mistake at the last position of each string $\prompt \in X_{\text{error}}$.
We therefore use ``error-inducing prefixes" $X_{\text{error-inducing}}$ to denote $\{ \prompt_{1:\abs{\prompt}-1} \mid \prompt \in X_{\text{error}} \}$.
\Cref{table:dyck_backtrack_at_error_inducing} shows that at prefixes in $X_{\text{error-inducing}}$, 
if we backtrack \emph{only once} for a small backtrack stride $\backtrackStride$, 
and continue the autoregressive sampling process, 
the error rate can be significantly reduced. 

\begin{table}[h]
\begin{center}
\begin{small}
\begin{tabular}{ lcc }
\toprule
\textbf{generation configuration} & \textbf{accuracy} \\
\hline
baseline: nucleus sampling top\_p = 0.9 & 0.331 \\
\hline
baseline: greedy argmax sampling & 0.334 \\
\hline
$\backtrackStride = 1$, then nucleus sampling top\_p = 0.9 & 0.366 \\
\hline
$\backtrackStride = 2$, then nucleus sampling top\_p = 0.9 & 0.438 \\
\hline
$\backtrackStride = 4$, then nucleus sampling top\_p = 0.9 & 0.591 \\
\hline
$\backtrackStride = 8$, then nucleus sampling top\_p = 0.9 & 0.790 \\
\bottomrule
\end{tabular}
\end{small}
\end{center}
\caption{
At error-inducing prefixes,
a larger backtrack stride $\backtrackStride$ significantly improves completion accuracy (\Cref{def:acc}).
}
\label{table:dyck_backtrack_at_error_inducing}
\end{table}

\subsubsection{Verifier effectively reduces errors}
\label{sec:experiments:synthetic:verifier_reduces_errors}

In \Cref{sec:experiments:synthetic:preventing_errors},
the sampling process forced a backtracking at error-inducing prefixes $X_{\text{error-inducing}}$.
Can the error reduction effect be retained by a \emph{trained} lightweight single-layer verifier $\verifier$ in \Cref{sec:experiments:synthetic:verifier}?
\Cref{table:dyck_verifier_error_inducing} shows that
\algoName (\Cref{alg:sampling_with_backtracking})
using the trained verifier is remarkably effective.
Moreover, in \Cref{sec:experiments:synthetic:predicted_backtracks_were_necessary},
we verify that the predicted backtracks were necessary.

\begin{table}[h]
\begin{center}
\begin{small}
\begin{tabular}{ lccc }
\toprule
\textbf{$\backtrackQuota$} & \textbf{$\backtrackStride$} & \textbf{accuracy} \\
\hline
1 & 2 & 0.421 \\
\hline
  & 4 & 0.500 \\
\hline
  & 6 & 0.604 \\
\hline
2 & 2 & 0.457 \\
\hline
  & 4 & 0.634 \\
\hline
  & 6 & 0.762 \\
\hline
4 & 2 & 0.518 \\
\hline
  & 4 & 0.762 \\
\hline
  & 6 & 0.921 \\
\hline
\multicolumn{2}{c}{baseline: nucleus sampling top\_p = 0.9} &  0.331  \\ 
\hline
\multicolumn{2}{c}{baseline: greedy argmax sampling} & 0.334 \\
\bottomrule
\end{tabular}
\end{small}
\end{center}sampling
\caption{
When the prompts are error-inducing prefixes,
a single-layer trained verifier significantly improves completion accuracy
using \algoName (\Cref{alg:sampling_with_backtracking}).
A larger backtrack quota $\backtrackQuota$
and a larger backtrack stride $\backtrackStride$ 
are both helpful.
}
\label{table:dyck_verifier_error_inducing}
\end{table}

\subsubsection{\algoName reduces completion errors on unseen OOD prefixes}
\label{sec:experiments:synthetic:verifier_reduces_errors_unseen_ood}

\Cref{table:dyck_verifier_error_inducing} in \Cref{sec:experiments:synthetic:verifier_reduces_errors} reported a significant improvement of accuracy by \algoName (\Cref{alg:sampling_with_backtracking}) 
when the prompts are $X_{\text{error-inducing}}$,
for which the language model $\lm$ made mistakes during completion.
Is the verifier $\verifier$ overfitted to these type of error-inducing prompts?
Can the accuracy improvement generalize to (average-case) out-of-distribution (OOD) prefixes,
i.e. independently sampled strings of the same distribution as $\dyck_{OOD}$ (\Cref{sec:experiments:synthetic:dyck})?

We independently sampled 10000 such out-of-distribution prompts $\dyck_{OOD}^{unseen}$,
and benchmark the accuracy of \algoName (\Cref{alg:sampling_with_backtracking})
against the baselines of 
nucleus sampling top\_p = 0.9 \citep{holtzman2020the}
and standard autoregressive sampling (equivalent to top\_p = 1.0).
\Cref{table:verifier_reduces_errors_unseen_ood}
shows that 
\algoName (\Cref{alg:sampling_with_backtracking}) significantly reduces completion errors.
Crucially, the improvement does not diminish on top of the commonly used baseline of truncating the tail probabilities during sequence sampling.
This verifies the desirable property that 
\algoName 
can be applied in combination with such baselines to further improve accuracy.
We also verify that the accuracy improvement does not hurt diversity (\Cref{sec:experiments:synthetic:diversity}).

Finally, provided with the verifier,
why does the model still make mistakes?
We include additional error analysis in \Cref{sec:experiments:synthetic:error_analysis}.

\begin{table}[h]
\begin{center}
\begin{small}
\begin{tabular}{ lcccc }
\toprule
\textbf{nucleus sampling top\_p} & \textbf{$\backtrackQuota$} & \textbf{$\backtrackStride$} & \textbf{\#errors $\pm$ std err} \\
\hline
0.9 & 0 & 0 & 240.0 $\pm$ 5.177 \\
\hline
  & 4 & 4 & 179.4 $\pm$ 1.020 \\
\hline
1.0 & 0 & 0 & 461.8 $\pm$ 8.304 \\
\hline
  & 4 & 4 & 200.0 $\pm$ 3.225 \\
\bottomrule
\end{tabular}
\end{small}
\end{center}
\caption{
\algoName (\Cref{alg:sampling_with_backtracking}) reduces completion errors on unseen out-of-distribution (OOD) prefixes.
Crucially, the improvement does not diminish on top of commonly used baselines, including
nucleus sampling top\_p = 0.9.
For each setting of top\_p,
we compare \algoName (\Cref{alg:sampling_with_backtracking})
(using backtrack quota $\backtrackQuota = 4$
and backtrack stride $\backtrackStride = 4$)
with the baseline 
(using backtrack quota $\backtrackQuota = 0$
and backtrack stride $\backtrackStride = 0$).
We report the number of completion errors that occur when completing an unseen set of 10000 independently sampled out-of-distribution prompts $\dyck_{OOD}^{unseen}$.
The experiment was repeated 5 times,
and we report the standard errors.
}
\label{table:verifier_reduces_errors_unseen_ood}
\end{table}

\subsection{Generating test cases with pretrained CodeLlama}
\label{sec:experiments:codellama}

Motivated by our findings in \Cref{sec:experiments:synthetic},
we apply essentially the same recipe of 
\algoName (\Cref{alg:sampling_with_backtracking})
to a real-data use case,
and show that \Cref{alg:sampling_with_backtracking} clearly improves the quality vs. query complexity trade-off on top of commonly used baselines, such as 
nucleus sampling \citep{holtzman2020the},
temperature scaling,
best-of-n rejection sampling,
and block best-of-n with process reward model.

\subsubsection{Task setup}
\label{sec:experiments:codellama:task}

A natural practical constrained generation task that requires both accuracy and diversity
is generating test cases for a target function specified by the prompt.
To have an unambiguous notion of groundtruth regarding accuracy and diversity,
we control the target function to be a simple implementation of the \texttt{append} function for Python lists.
Under this setting,
we wrote an evaluator script which analyzes model generated completions, 
measuring the accuracy by checking whether a test case correctly tests list \texttt{append},
and measuring the diversity by checking how many distinct test cases are generated.~\footnote{
Two test cases are different if and only if they test different lists or different appended items.
}

We write a program to systematically generate task prompts,
randomizing over function names and demonstration examples.
Each prompt includes 1 demonstration example specifying the intended output format,
followed by a target function (implementing \texttt{append}),
and finally requests 8 test cases be generated.
Two examples of the prompt are provided in \Cref{table:prompt_example},
and correspondingly, two examples of model completions of these prompts are provided in \Cref{table:generation_example}
in \Cref{sec:experiments:codellama:examples}.

\paragraph{Evaluation metrics}
The test prompts include 10 different target function names that are unseen during training.
Each target function name is independently tested 10 times.
Since each prompt requests 8 test cases,
the total number of test cases requested for each run of a decoding algorithm is 
$8 \times 10 \times 10 = 800$.
We will measure the following metrics:
\begin{enumerate}
    \item $N_\text{distinct correct}$: the number of \textbf{distinct correct} test cases generated. This metric naturally incorporates both accuracy and diversity.
    \item $\text{Acc}_\text{distinct} \coloneqq N_\text{distinct correct} / 800$.
    \item $\mathcal{C}$: the query complexity (analogous to \Cref{def:oracle_complexity}). We measure the total number of queries made to the generator $\lm$ when it completes the prompts.
    Each completion allows at most 384 tokens to be generated, so the max $\mathcal{C}$ is $384 \times 10 \times 10 = 38400$ 
    unless ``block best-of-n" (\Cref{sec:experiments:codellama:verifier}) is used.
\end{enumerate}

We use a pretrained CodeLlama \citep{roziere2023code}
as the generator language model $\lm$, 
which we freeze during our experiments.
We discuss common baselines in \Cref{sec:experiments:codellama:baselines}.
We follow almost the same approach as \Cref{sec:experiments:synthetic:verifier} to train our verifier on this coding task.
We next present technical details and ablation experiments regarding design choices of verifier training in \Cref{sec:experiments:codellama:verifier}.

\subsubsection{Training the verifier}
\label{sec:experiments:codellama:verifier}

We follow almost the same training approach as \Cref{sec:experiments:synthetic:verifier}.
The differences are described below.
The generator language model $\lm$ is a pretrained CodeLlama \citep{roziere2023code} (7B parameters), which we freeze during our experiments.

\paragraph{An intermediate layer provides more informative representations for verifier training than the last layer.}
Instead of training the verifier $\verifier$ on top of the last layer (i.e. layer 31) representations of $\lm$,
we instead treat the layer index as a hyperparameter, 
and conducted a grid search over layer index $\in \{ 3, 7, 11, 15, 19, 23, 27, 31\}$.
Among these candidates, layer 27 representations resulted in the best accuracy.
We therefore exclusively used layer 27 representations in subsequent experiments,
and finally conducted an ablation study on the top-performing setting of the baseline to back-test the impact of using other layers.
\Cref{table:layer27_better_than_layer31} shows that layer 27 outperforms layer 31.
We conjecture that the layer 31 representations may be too specific for the next-token prediction task,
which is not necessarily the optimal for discriminating correct prefixes vs. incorrect ones.~\footnote{
This is in line with some prior works that also observed that 
the final layers of language models tend to be more task-specific than the intermediate layers \citep{liu2019linguistic, kovaleva2019revealing, rogers2021primer}.
}
We also include results for a few other layers near the final layer.
Note that even with a sub-optimally chosen layer,
the accuracy of \algoName (\Cref{alg:sampling_with_backtracking}) 
still outperforms the top-performing settings of the baseline found through grid search.~\footnote{
See \Cref{sec:experiments:codellama:baselines} for details about baselines.
}

\begin{table*}[t]
\begin{center}
\begin{small}
\begin{tabular}{ lcc }
\toprule
\textbf{layer index} & \textbf{$\text{Acc}_\text{distinct} \pm$ std err} \\
\hline
27 & 0.714 $\pm$ 0.011 \\
\hline
28 & 0.711 $\pm$ 0.016 \\
\hline
26 & 0.708 $\pm$ 0.018 \\
\hline
30 & 0.706 $\pm$ 0.036 \\
\hline
24 & 0.701 $\pm$ 0.033 \\
\hline
31 & 0.688 $\pm$ 0.028 \\
\hline
29 & 0.676 $\pm$ 0.021 \\
\hline
25 & 0.672 $\pm$ 0.030 \\
\hline
23 & 0.709 $\pm$ 0.017 \\
\hline
3 & 0.700 $\pm$ 0.028 \\
\hline
15 & 0.700 $\pm$ 0.028 \\
\hline
19 & 0.692 $\pm$ 0.028 \\
\hline
7 & 0.691 $\pm$ 0.031 \\
\hline
11 & 0.650 $\pm$ 0.041 \\
\hline
ablation: random verifier & 0.663 $\pm$ 0.027 \\
\hline
baseline: nucleus sampling + temperature scaling & 0.660 $\pm$ 0.042 \\
\bottomrule
\end{tabular}
\end{small}
\end{center}
\caption{
Ablation: 
layer 27 representations of CodeLlama outperform layer 31 (the last layer)
in terms of the quality of the error predictor trained based on these features.
We control all other setting to be the same as the top-performing settings of the baseline
(nucleus sampling top\_p = 0.95 \citep{holtzman2020the} and temperature 1.0),
whose performance is also included in the table.
The other rows in this table (layer 27 and layer 31)
refer to applying \algoName (\Cref{alg:sampling_with_backtracking}) 
using backtrack quota $\backtrackQuota = 4$,
backtrack stride $\backtrackStride = 4$,
and verifiers trained on layers 24, ..., 31 of the generator (CodeLlama), respectively.
The row \emph{ablation: random verifier} refers to 
a verifier that returns Uniform[0, 1], and uses the same $\backtrackQuota$, $\backtrackStride$ as the above.
The experiment was repeated 5 times,
and we report the standard errors.
The rows are sorted by mean $\text{Acc}_\text{distinct}$ (\Cref{sec:experiments:codellama:task}).
}
\label{table:layer27_better_than_layer31}
\end{table*}

\paragraph{With limited backtrack quota, it is better to be conservative in its usage.}
The verifier $\verifier$ is trained with binary labels
(1 if correct, 0 if wrong).
Although there are a roughly equal number of training samples whose labels are 0 or are 1,
using 0.5 as the error prediction threshold turned out to be suboptimal.
Since our \algoName (\Cref{alg:sampling_with_backtracking}) 
only allows a small backtrack quota $\backtrackQuota = 4$,
it makes sense to only use backtrack quota when the error predictor is very confident that the current intermediate generation is wrong.
Moreover, compared with our synthetic Dyck grammar setting (target length = 32) (\Cref{sec:experiments:synthetic}),
our code generation setting allows much longer generations (up to 384),
which further justifies conservatively spending the small backtrack quota $\backtrackQuota$.
Consequently, we consider decreasing the error prediction threshold to 0.1.
\Cref{table:codellama_error_prediction_threshold} shows that 0.1 is a better error prediction threshold than the default 0.5 in all settings we tried.

\begin{table*}[t]
\begin{center}
\begin{small}
\begin{tabular}{ lcccccc }
\toprule
\textbf{$\backtrackQuota$} & \textbf{$\backtrackStride$} & \textbf{top\_p} & \textbf{temperature} & \textbf{error prediction threshold} & \textbf{$\text{Acc}_\text{distinct}$ $\pm$ std err} \\
\hline
\hline
4 & 4 & 0.95 & 1.0 & 0.1 &  0.714 $\pm$ 0.011 \\
\hline
4 & 4 & 0.95 & 1.0 & 0.5 &  0.676 $\pm$ 0.019 \\
\hline
\hline
4 & 4 & 1.0 & 1.0 & 0.1 &  0.639 $\pm$ 0.061 \\
\hline
4 & 4 & 1.0 & 1.0 & 0.5 &  0.604 $\pm$ 0.047 \\
\hline
\hline
4 & 4 & 1.0 & 1.2 & 0.1 &  0.440 $\pm$ 0.026 \\
\hline
4 & 4 & 1.0 & 1.2 & 0.5 &  0.334 $\pm$ 0.013 \\
\hline
\hline
4 & 10 & 1.0 & 1.0 & 0.1 &  0.622 $\pm$ 0.046 \\
\hline
4 & 10 & 1.0 & 1.0 & 0.1 &  0.604 $\pm$ 0.030 \\
\bottomrule
\end{tabular}
\end{small}
\end{center}
\caption{
Ablation: 0.1 is a better error prediction threshold than the default 0.5 in all settings we tried,
including various
nucleus sampling \citep{holtzman2020the} top\_p,
temperature scaling,
and backtrack stride $\backtrackStride$.
In this table, we divide the rows into groups of 2,
separated by double horizontal lines,
such that within each group,
the only difference is the error prediction threshold.
In all groups, 
0.1 leads to higher $\text{Acc}_\text{distinct}$ than 0.5.
The experiment was repeated 5 times,
and we report the standard errors.
}
\label{table:codellama_error_prediction_threshold}
\end{table*}

\paragraph{Block verifier.}
Our verifier applies to the token level,
i.e. predicting an accept/reject action after the generator $\lm$ generates each token.
In many practical settings (including ours),
it is natural to divide the generated output into \emph{blocks} 
(each block may contain multiple tokens),
e.g. in writing math proofs, each block may correspond to one reasoning step;
in writing codes, each block may correspond to one line of codes.
Recent works achieved strong empirical performance by 
generating multiple candidates for each block of intermediate model generations,
train process reward models that evaluate each candidate,
and select the best-scoring candidate
(see e.g. \citet{wu2024inference} and references therein).
We refer to this as the ``block-best-of-n" approach.
To compare with such ``block-best-of-n" baselines,
we train ``block verifiers" $\verifier_{\text{block}}$ which scores prefixes that are full lines of model output for our task.
We will show that this ``block best-of-n" approach is helpful, 
but is outperformed by our \algoName (\Cref{alg:sampling_with_backtracking})
in terms of accuracy-efficiency trade-off.

\paragraph{Does a deeper verifier perform better?}

The above experiments follow \Cref{sec:experiments:synthetic:verifier} in training a single-linear-layer verifier.
In this section,
we test the effects of scaling up the verifier depth.
Specifically, we test verifiers based on Multi-Layer Perceptrons \citep{rosenblatt1958perceptron} of depths 2, 4, 8,
with ReLU activations \citep{nair2010rectified} between adjacent parameterized layers.
\Cref{table:verifier_more_mlp_layers} shows that more MLP layers did not outperform the 1-linear-layer verifier
even though they can be trained to similar \emph{error-predicting} accuracies,
measured by their accuracy in predicting whether a prefix is correct or incorrect
on a held-old validation set of prompts for our task (\Cref{sec:experiments:codellama:task})
followed by partial generations by CodeLlama.
In other sections of this paper, unless otherwise noted, we always use a single-linear-layer verifier for \algoName (\Cref{alg:sampling_with_backtracking})
(and of course, no verifier for baselines (\Cref{sec:experiments:codellama:baselines}) ).

\begin{table*}[t]
\begin{center}
\begin{small}
\begin{tabular}{ lccc }
\toprule
\textbf{verifier \# MLP layers} & \textbf{verifier validation accuracy} & \textbf{$\text{Acc}_\text{distinct} \pm$ std err} \\
\hline
1 & 0.96 & 0.714 $\pm$ 0.011 \\
\hline
4 & 0.97 & 0.699 $\pm$ 0.038 \\
\hline
2 & 0.97 & 0.687 $\pm$ 0.035 \\
\hline
8 & 0.97 & 0.684 $\pm$ 0.015 \\
\hline
ablation: random verifier &  0.50 & 0.663 $\pm$ 0.027 \\
\hline
baseline: nucleus sampling + temperature scaling & N/A & 0.660 $\pm$ 0.042 \\
\bottomrule
\end{tabular}
\end{small}
\end{center}
\caption{
Ablation: 
Deeper verifiers do not outperform the 1-linear-layer verifier
even though they can be trained to similar \emph{error-predicting} accuracies on held-old validation set.
We control all other setting to be the same as the top-performing settings of the baseline
(nucleus sampling top\_p = 0.95 \citep{holtzman2020the} and temperature 1.0),
whose performance is also included in the table.
The other rows in this table
refer to applying \algoName (\Cref{alg:sampling_with_backtracking}) 
using backtrack quota $\backtrackQuota = 4$,
backtrack stride $\backtrackStride = 4$,
and verifiers with 1, 2, 4, 8 layers, respectively.
The row \emph{ablation: random verifier} refers to 
a verifier that returns Uniform[0, 1], and uses the same $\backtrackQuota$, $\backtrackStride$ as the above.
The experiment was repeated 5 times,
and we report the standard errors.
The rows are sorted by mean $\text{Acc}_\text{distinct}$ (\Cref{sec:experiments:codellama:task}).
}
\label{table:verifier_more_mlp_layers}
\end{table*}

\paragraph{Where are the potentials for further improving $\text{Acc}_\text{distinct}$?}
How optimal are our verifiers,
and what are some ways to further improve them?
To probe these potentials,
we wrote a rule-based groundtruth verifier for our task (\Cref{sec:experiments:codellama:task})
and used it as a drop-in replacement of our trained verifier.
\Cref{table:codellama_groundtruth} shows that 
the $\text{Acc}_\text{distinct}$ enabled by our trained verifier almost reached the $\text{Acc}_\text{distinct}$ enabled by the groundtruth verifier,
showing that improving verifier training may not be the most fruitful direction for further improvement.
Interestingly, using a much larger $\backtrackQuota$ or $\backtrackStride$ (increasing from 4 to 10) 
does not necessarily improve the accuracy (sometimes even \emph{decreasing} the accuracy).
We conjecture that in these experiments, 
the (imperfect) generator oracle (CodeLlama),
not the verifier,
was the bottleneck for $\text{Acc}_\text{distinct}$.
As a result,
unnecessarily backtracking and forcing the model to re-generate more tokens may increase the chance that the model makes mistakes.

\begin{table*}[h]
\begin{center}
\begin{small}
\begin{tabular}{ lcccc }
\toprule
\textbf{verifier type} & \textbf{$\backtrackQuota$} & \textbf{$\backtrackStride$} & \textbf{$\text{Acc}_\text{distinct}$ $\pm$ std err} \\
\hline
\hline
groundtruth & 4 & 4 & 0.719 $\pm$ 0.022 \\
\hline
groundtruth & 10 & 4 & 0.717 $\pm$ 0.015 \\
\hline
\hline
trained & 4 & 4 & 0.714 $\pm$ 0.011 \\
\hline
trained & 10 & 4 & 0.692 $\pm$ 0.025 \\
\hline
\hline
ablation: random verifier & 4 & 4  & 0.663 $\pm$ 0.027 \\
\hline
baseline: nucleus sampling + temperature scaling & 0 & 0  & 0.660 $\pm$ 0.042 \\
\hline
\hline
trained & 4 & 10 & 0.622 $\pm$ 0.046 \\
\bottomrule
\end{tabular}
\end{small}
\end{center}
\caption{
Ablation: 
Our trained verifier approaches the accuracy of the groundtruth verifier,
evaluated by their ability to assist CodeLlama in completing our test case generation task (\Cref{sec:experiments:codellama:task})
using \algoName (\Cref{alg:sampling_with_backtracking}).
In these experiments, we control the
nucleus sampling \citep{holtzman2020the} top\_p = 0.95 and temperature scaling = 1.0
which are the optimal setting for baseline, found by grid search (\Cref{sec:experiments:codellama:baselines}).
The rows are sorted by $\text{Acc}_\text{distinct}$.
The row \emph{ablation: random verifier} refers to 
a verifier that returns Uniform[0, 1].
Interestingly, using a much larger $\backtrackQuota$ or $\backtrackStride$ does not necessarily improve the accuracy (sometimes even \emph{decreasing} the accuracy).
We conjecture that the generator model, CodeLlama, is imperfect, 
so unnecessarily backtracking and forcing the model to re-generate more tokens may increase the chance that the model makes mistakes.
The experiment was repeated 5 times,
and we report the standard errors.
}
\label{table:codellama_groundtruth}
\end{table*}

\subsubsection{\algoName improves accuracy}
\label{sec:experiments:codellama:verifier_improves_accuracy}

In this section, we show that 
\algoName (\Cref{alg:sampling_with_backtracking})
achieves higher $\text{Acc}_\text{distinct}$
than all the baselines described in \Cref{sec:experiments:codellama:baselines}.
Similar to our observations based on the synthetic Dyck grammar data (\Cref{sec:experiments:synthetic:verifier_reduces_errors_unseen_ood}), 
the improvement does not diminish on top of commonly used baselines.
This verifies the desirable property that 
\algoName 
(\Cref{alg:sampling_with_backtracking})
can be applied in combination with commonly used baselines to further improve accuracy.
The primary comparisons are reported in \Cref{table:codellama_verifier_improves_accuracy_simplified}, 
and additional results are in
\Cref{table:codellama_verifier_improves_accuracy}
in \Cref{sec:experiments:codellama:full_results}. 
Moreover, in \Cref{sec:experiments:codellama:ood},
we show that 
analogous to our observations on the synthetic Dyck grammar (\Cref{sec:experiments:synthetic:verifier_reduces_errors_unseen_ood}),
\algoName (\Cref{alg:sampling_with_backtracking})
generalizes better to \emph{out-of-distribution} prompts than baselines.

\begin{table}[t!]
\begin{center}
\begin{small}
\begin{tabular}{ lccccc }
\toprule
\textbf{$\backtrackQuota$} & \textbf{$\backtrackStride$} & \textbf{top\_p} & \textbf{\temperature} & \textbf{block BoN} & \textbf{$\text{Acc}_\text{distinct}$ $\pm$ std err} \\
\hline
4 & 4 & 0.95 & 1.0 &  &  0.714 $\pm$ 0.011 \\
\hline
0 &  & 0.95 & 1.0 & 2 &  0.684 $\pm$ 0.038 \\
\hline
0 &  & 0.95 & 1.0 &  &  0.660 $\pm$ 0.042 \\
\hline
0 &  & 0.95 & 1.0 & 4 &  0.623 $\pm$ 0.036 \\
\hline
0 &  & 0.95 & 1.0 & 8 &  0.559 $\pm$ 0.038 \\
\hline
\hline
4 & 4 & 1.0 & 1.0 &  &  0.639 $\pm$ 0.061 \\
\hline
4 & 10 & 1.0 & 1.0 &  &  0.622 $\pm$ 0.046 \\
\hline
0 &  & 1.0 & 1.0 &  &  0.504 $\pm$ 0.025 \\
\hline
\hline
4 & 4 & 1.0 & 1.2 &  &  0.440 $\pm$ 0.026 \\
\hline
0 &  & 1.0 & 1.2 &  &  0.269 $\pm$ 0.025 \\
\hline
\hline
0 &  & 0.0 & 1.0 &  &  0.013 $\pm$ 0.000 \\
\bottomrule
\end{tabular}
\end{small}
\end{center}
\vspace{-5mm}
\caption{
\algoName (\Cref{alg:sampling_with_backtracking}) improves accuracy
and outperforms 
nucleus sampling top\_p,
temperature scaling \temperature,
and block best-of-n (BoN) (\Cref{sec:experiments:codellama:verifier}).
In this table, we divide the rows into groups,
separated by double horizontal lines,
such that each group uses the same top\_p and temperature.
The backtrack quota $\backtrackQuota = 0$ means a baseline algorithm that does not use the verifier.
$\backtrackQuota > 0$ means \algoName 
with the corresponding $\backtrackQuota$ and $\backtrackStride$.
\emph{block BoN} specifies the number of candidates generated for each block;
empty block BoN means not using block best-of-n.
In all groups, 
\algoName
leads to higher $\text{Acc}_\text{distinct}$ 
than all other methods.
The last group corresponds to
argmax greedy decoding,
which has low $\text{Acc}_\text{distinct}$ due to low diversity.
The experiment was repeated 5 times,
and we report the standard errors.
The complete set of experiments are reported in a larger \Cref{table:codellama_verifier_improves_accuracy} in \Cref{sec:experiments:codellama:full_results}.
}
\label{table:codellama_verifier_improves_accuracy_simplified}
\end{table}

\clearpage
\subsubsection{\algoName is query efficient}
\label{sec:experiments:codellama:efficient}

In this section, we show that 
\algoName (\Cref{alg:sampling_with_backtracking})
achieves a better tradeoff between $\text{Acc}_\text{distinct}$
and
query efficiency $\mathcal{C}$
than all the baselines described in \Cref{sec:experiments:codellama:baselines}.
The primary comparisons are visualized in \Cref{fig:codellama_query_efficiency} (in this section)~\footnote{
    This visualization in \Cref{fig:codellama_query_efficiency} slightly favors the ``block best-of-n sampling" baseline, 
    because its implementation stops the decoding process once the requested number of test cases are generated, 
    whereas when running our algorithm or non-best-of-n baselines, 
    the model is allowed to (and in fact does indeed) generate irrelevant tokens afterwards, 
    which hurts query complexity.
    Even under this disadvantage, 
    \algoName still outperforms the ``block best-of-n sampling" baselines.
}
and \Cref{fig:codellama_query_efficiency_no_bon}
(in \Cref{sec:experiments:codellama:visualize_efficiency}).
Numerical values of $\mathcal{C}$ are reported in
\Cref{table:codellama_verifier_improves_accuracy}
in \Cref{sec:experiments:codellama:full_results}. 

    

\begin{figure*}[h!]
  \centering
  \begin{minipage}[b]{1.0\textwidth}  
    \centering
    \includegraphics[width=1.0\textwidth]{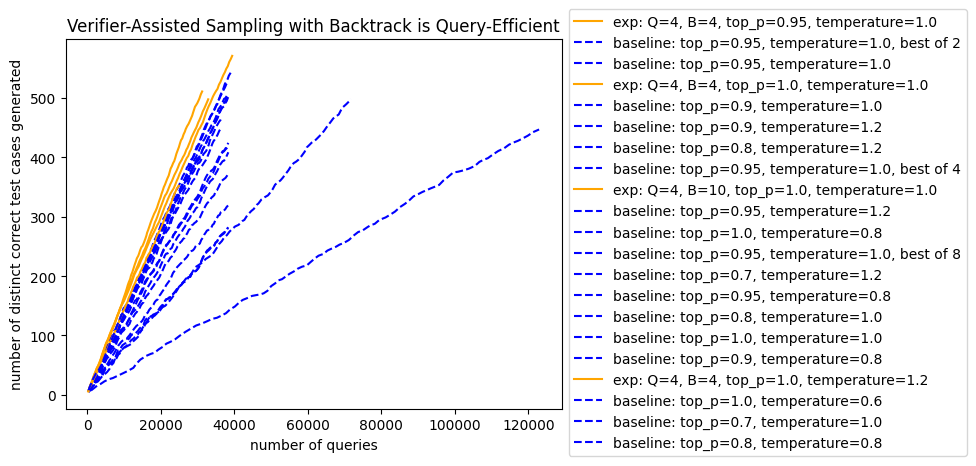}  
  \end{minipage}
  \vspace{-7mm}
  \caption{
  \algoName (\Cref{alg:sampling_with_backtracking}) is query-efficient.
  The horizontal axis denotes query complexity $\mathcal{C}$,
  and the vertical axis denotes the number of distinct correct test cases generated $N_\text{distinct correct}$,
  both defined in \Cref{sec:experiments:codellama:task}.
  Blue dashed lines correspond to the baselines (described in \Cref{sec:experiments:codellama:baselines}), 
  whereas orange solid lines correspond to \algoName with various $\backtrackQuota$ and $\backtrackStride$, 
  both defined in \Cref{alg:sampling_with_backtracking}.
  Since the slopes of the orange curves are visibly greater than the slopes of the blue curves,
  we conclude that \algoName is more query-efficient than baselines.
  The experiment was repeated 5 times,
  and each dot is the average metric of these 5 runs.
  The specific numbers and standard errors are reported in \Cref{table:codellama_verifier_improves_accuracy}.
  A more zoomed-in version of this plot is in \Cref{fig:codellama_query_efficiency_no_bon}.
  }  
  \label{fig:codellama_query_efficiency}
\end{figure*}


\section{Related work}
\label{sec:related_works}

\paragraph{Inference-time scaling for language models}
Practical language generation tasks typically impose various task-specific constraints in addition to the general grammatical rules of language.
One effective way to improve the chance of satisfying such constraints
is to increase the inference-time compute 
through search and/or rejection sampling.
There has been a long history of prior works that employ inference-time scaling in the language generation context, 
dating as far back as beam search \citep{lowerre1976harpy,hayes1976speech,ow1988filtered,jurafsky2000speech,graves2012sequence}.
Much more recently, as researchers develop the techniques for language models to follow instructions 
(see the survey by \citet{zhang2023instruction} and references therein),
more creative designs for inference-time scaling algorithms have become viable 
\citep{wang2022self,yao2023tree,zhang2023planning,zhou2023language,choi2023kcts,liu2024don,xie2024self,snell2024scaling,zhao2024probabilistic},
and see \citet{wu2024inference} for a recent survey on cost-performance tradeoffs of these approaches.
Compared to these approaches in the literature, 
our \algoName (\Cref{alg:sampling_with_backtracking}) shares some features with lookahead search \citep{snell2024scaling}
(specifically, the rejection decision at the current position is based on the verifier decision at some future position).
However, two main differences are:
(1) \algoName (\Cref{alg:sampling_with_backtracking})
does not use a beam (i.e. does not need to generate multiple candidates, thus reducing query complexity),
and (2) \Cref{alg:sampling_with_backtracking} uses a different sampling approach (namely argmax) for the backtracked positions (we verify in \Cref{sec:appendix:experiments:algo} that in some settings this significantly improves the accuracy).
It is a natural future research direction to design inference algorithms that combine the advantages of the two.

\paragraph{Incorporating a process reward model to assist language generation}
Among the vast design space for inference-time scaling,
process reward modeling has been proven to be an important component common to many LLM systems
\citep{polu2020generative,uesato2022solving,ma2023let,lightman2023let,wang2024math}.
The process verifier which we study (\Cref{def:verifier}) is a special case of such process reward model if we restrict the output to be binary.
However, there are still challenging open problems around process reward modeling,
such as how to properly define the ``blocks"
\citep{guo2025deepseek}
(see also our definitions in the ``Block verifier" part of \Cref{sec:experiments:codellama:verifier}).
Towards bringing more clarity to these open questions,
our work develops a theoretical framework for reasoning about the query complexity of process verifiers.
Moreover, our experiments suggest the potentials of a lightweight process verifier in improving the query complexity, accuracy, and diversity of constrained generation.
In particular, our theory and experiments suggest
(1) the ``blocks" do not necessarily have to be carefully designed --- setting each token as a block might potentially suffice, at least in some more structured domains such as codes;
(2) \emph{backtracking} (\Cref{alg:sampling_with_backtracking}, \Cref{sec:experiments}) is a robustly effective strategy that should be applied in combination with process verifiers.
A possible extension of the type of verifier we study (\Cref{def:verifier}) is: 
instead of outputting binary acceptance / rejection decisions, the verifier could return a probability of accepting each prefix \cite{yang2021fudge}.
However, some tasks may require that the output distribution should match some target distribution,
and it may be challenging to ensure that the acceptance probability is well-calibrated in order to satisfy this requirement.

\paragraph{Controlled synthetic data distribution as a sandbox for studying language models}
Our Dyck grammar distribution most closely follows \citet{wen2023uninterpretability} 
(though we switched to a fixed-sequence-length setting, and used unbalanced bracket type probability, instead of length extrapolation, to define the criteria for a prompt to be \emph{out-of-distribution}).
Dyck grammar was also used in other prior works \citep{hewitt2020rnns, ebrahimi2020self, yao2021self, liu2023same, liu2023Transformers} to study language models.
Dyck grammar can be seen as a special case of the task (specifically context-free grammar) considered in SynCode \citep{ugare2024syncode}.
Other synthetic data distributions have been used to study various aspects of language models in prior works, including
representational capability \citep{bhattamishra2020ability, li2021limitations, zhang2022unveiling, zhao2023Transformers}, 
statistical sample complexity \citep{edelman2022inductive},
optimization process \citep{lu2021on, jelassi2022vision,li2023Transformers,bietti2023birth},
sampling \citep{li2024promises},
and architectural limitations \citep{liu2023exposing},
and see references cited therein.

\section{Conclusion}

We introduce a new theoretical framework for elucidating the design space of verifiers and correspondingly a simple family of rejection-sampling-based inference algorithms.
In particular, our theory proves the computational benefits of incorporating a \emph{process verifier},
measured by the \emph{query complexity} of calling the generator.
On the other hand, our theory also reveals the subtleties:
straightforwardly applying a process verifier in a Tokenwise rejection sampling algorithm may unintentionally re-weigh the distribution among sequences that satisfy the constraints,
which could be undesirable for settings that require a strong notion of distributional \emph{calibration}.
Empirically, through fine-grained experiments on both synthetic and realistic data,
we show that the Tokenwise rejection sampling algorithm, when combined with \emph{backtracking},
is a robustly effective recipe for reducing query
complexity, improving accuracy, and maintaining diversity. 
For future works, we hope the theoretical framework and empirical observations can inspire systematic characterization of the strengths and weaknesses of the diverse set of rejection-sampling-based inference-time algorithms.
Concrete open problems at the intersection of theory and experiments include 
investigating the realistic and necessary conditions on the verifiers for the inference-time algorithm to
achieve distributional calibration
(e.g. it is unrealistic in some language generation setting to assume that a verifier returns the calibrated acceptance probability in rejection sampling),
and synergistically designing query-efficient verifier-assisted generation algorithms.

\subsubsection*{ACKNOWLEDGEMENTS}
We thank Bingbin Liu for insightful discussions.

AR and YL are supported in part by NSF awards IIS-2211907, CCF-2238523, IIS-2403275, an Amazon Research Award, a Google Research Scholar Award, an OpenAI Superalignment Fast Grant, and DoD Award N000142512124.

Part of this work was done when YL and AR
were visiting the Simons Institute for the Theory of Computing.
We thank the Simons Institute for hosting us.
We also thank the Simons Foundation and Google for sponsoring computational resources for us and other program visitors,
and we thank Matus Telgarsky and NYU IT for managing these computational resources.


\newpage
\bibliography{references}
\bibliographystyle{ref_style}

\newpage
\appendix

\thispagestyle{empty}

\def\toptitlebar{
\hrule height4pt
\vskip .25in}

\def\bottomtitlebar{
\vskip .25in
\hrule height1pt
\vskip .25in}

\newcommand{\makesupplementtitle}{\hsize\textwidth
    \linewidth\hsize \toptitlebar {\centering
        {\Large\bfseries Supplementary Material \par}}
    \bottomtitlebar}

\makesupplementtitle

\renewcommand{\theequation}{\thesection.\arabic{equation}}

\vspace{-10mm}
\tableofcontents

\clearpage
\section{Discussions}
\label{sec:appendix:discussions}

\subsection{Is query efficiency a reasonable notion of efficiency?}

There are many reasonable efficiency metrics, and they do not always positively correlate with each other \citep{dehghani2021efficiency}.

Our paper focuses on \emph{query complexity} (measured by the number of tokens generated by the language model to satisfactorily complete the task~\footnote{
This definition is natural since generating one token involves one forward pass of the (decoder-only autoregressive) language model, i.e. one query.
}
),  and we do not claim that the same conclusions apply when we switch out query complexity for other metrics of efficiency, such as wall-clock time.

We think query complexity is one (but not necessarily the only, or the most) important aspect of efficiency due to the following considerations:
\begin{itemize}
    \item Many existing large language model (LLM) providers charge service fees to the users according to the number of tokens generated by the language model for the user, i.e. query complexity.
    \item In the \emph{single sequence generation} setting, 
    controlling all other conditions to be held the same, 
    query complexity positively correlates with the size of computation (the number of decoder forward passes) and wall-clock time.
    \item In the \emph{batched generation} setting, admittedly, the wall-clock time does not necessarily scale linearly with query complexity~\footnote{
    For example, the wall-clock time of generating $n$ candidate responses (with batch size $n$) might be less than $n$ multiplying the wall-clock time of generating 1 candidate response.
    }
    , meaning that the naive best-of-$n$ rejection sampling is not as slow as query complexity would indicate
    (if the LLM has sufficient bandwidth for it).
    However, in many realistic LLM inference settings,
    the LLM receives a large number of query requests per second, 
    so there is no additional idle availability~\footnote{
    Unless more GPUs/TPUs are allocated to serve this LLM.
    } 
    for duplicating each sequence generation request by $n$.
\end{itemize}

Although, as mentioned above, query complexity is partially indicative of a few practically important efficiency metrics (e.g. monetary cost or wall-clock time),
there are aspects of these metrics that are not tracked by query complexity.
For example, different types of \emph{hardware} and \emph{cache} may have different efficiency best practices.
In particular, on GPUs and TPUs, algorithms that better exploit \emph{parallelization} or \emph{tensorized computation} tend to be more efficient.
Therefore, an important direction for future work is to design and analyze \emph{hardware-aware algorithms} that incorporate these important aspects of the inference setup.

\clearpage
\subsection{On the hardness of the knapsack problem}

The hardness of the knapsack problem has been the subject of extensive study. Specifically, the decision version of this problem has found applications in the context of secure cryptosystems \cite{Odlyzko1998TheRA}. Under no assumptions on the input structure, the best known algorithm is based on dynamic programming \cite{knapsackProblems} and runs in pseudo-polynomial time. This algorithm is also used to obtain an FPTAS and its runtime is effectively polynomial if one further assumes that the weights are polynomially bounded in $D$. More exact or approximate algorithms achieve polynomial runtime, under specific input structures. Specifically, when the weights form a superincreasing sequence, that is,
\begin{align*}
X_i \geq \sum_{j=1}^{i-1} X_j \ \ \forall i \in [2, D] \cap \mathbb{Z}, 
\end{align*}
a greedy algorithm solves the knapsack decision problem \cite{Odlyzko1998TheRA} in linear time. On the other hand, when the density of the knapsack
\begin{align*}
    \frac{D}{\log_2 (\max_i \{X_i\}_{i=1}^d)}
\end{align*}
 is small enough, knapsack is approximately solved in polynomial time by lattice reduction algorithms \cite{plantard2013lattice}.
 Our argument considers the most general setting, in which no assumptions are made on the structure of the inputs $\{ X_i\}_{i=1}^t$, $c$ and the decision problem is NP-complete \cite{reducibilityKarp}.


\newpage
\clearpage
\section{Additional experimental results}
\label{sec:appendix:experiments}

We complement \Cref{sec:experiments} by providing additional technical details.

\subsection{Additional results about language models trained on synthetic data}
\label{sec:appendix:experiments:synthetic}

\subsubsection{Visualizing the language model representations of correct vs. incorrect sequences}
\label{sec:experiments:synthetic:visualizing}

\begin{figure}[!h]
  \centering
  \begin{minipage}[b]{0.5\textwidth}  
    \centering
    \includegraphics[width=1.0\textwidth]{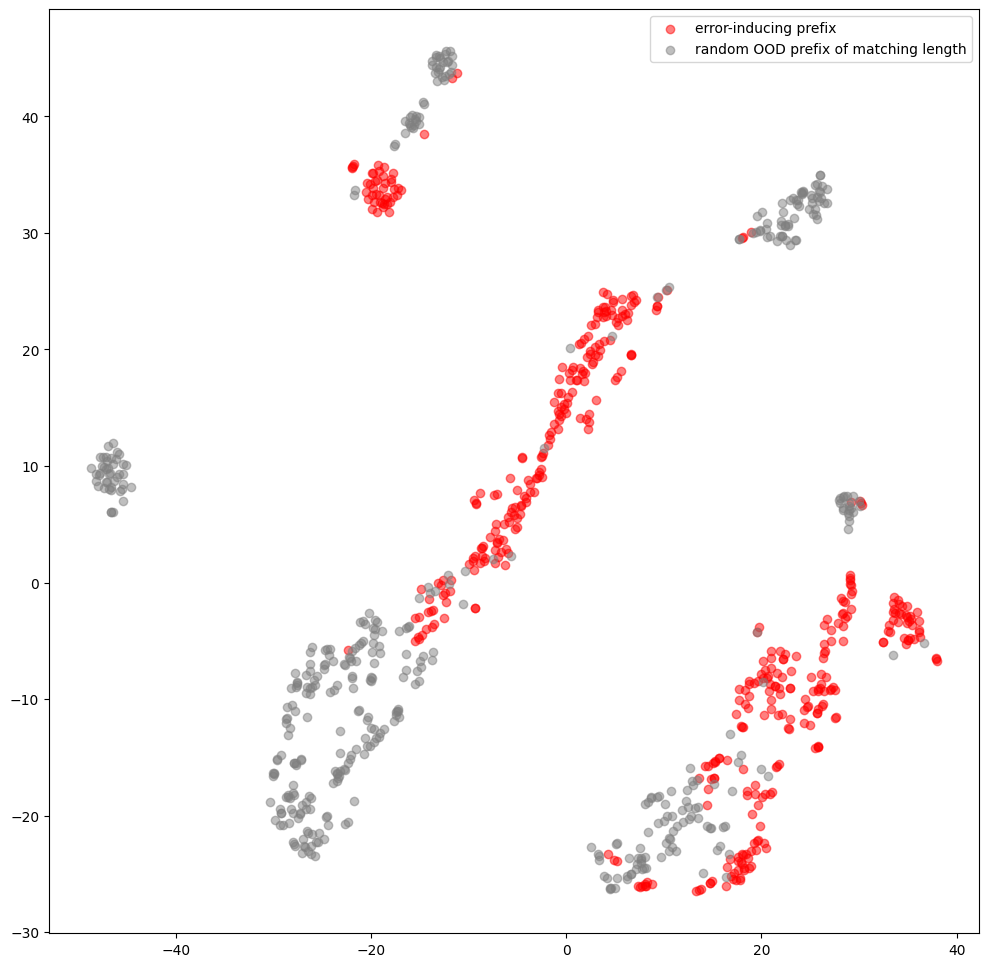}  
  \end{minipage}
  \caption{
  TSNE plot for the $\lm$ last-layer-last-position representations of strings in $X_{\text{error}} \cup X_{\text{correct}}$.
  Red dots correspond to the representations of incorrect strings, 
  whereas gray dots correspond to the representations of correct strings of comparable lengths.
  We can see that the representations of incorrect strings form just a few clusters.
  This intuitively justifies using a lightweight verifier on top of these $\lm$ representations. 
  }  
  \label{fig:correct_vs_err_rep_dyck}
\end{figure}

\clearpage
\subsubsection{The predicted backtracks were necessary}
\label{sec:experiments:synthetic:predicted_backtracks_were_necessary}

During the experiment in  \Cref{sec:experiments:synthetic:verifier_reduces_errors},
the trained verifier $\verifier$ predicted backtracks at many positions.
Were they really necessary?
For each setting of backtrack quota $\backtrackQuota$
and backtrack stride $\backtrackStride$, 
we collect the set of prefixes $X_{\text{predicted backtracks}}$ where $\verifier$ predicted backtracks.
Then, we let the language model $\lm$ complete each string in $X_{\text{predicted backtracks}}$ without any backtracks,
using common decoding techniques such as 
nucleus sampling top\_p = 0.9 \citep{holtzman2020the}
and argmax greedy decoding.
\Cref{table:predicted_backtracks_were_necessary} shows that
without backtracking,
the completion accuracy is much lower than the accuracy reported in \Cref{table:dyck_verifier_error_inducing}.
This implies that $X_{\text{predicted backtracks}}$ were indeed challenging prefixes for the $\lm$,
which verifies that the backtracks predicted by verifier $\verifier$ were necessary.

\begin{table*}[h]
\begin{center}
\begin{small}
\begin{tabular}{ lccccc }
\toprule
\textbf{$\backtrackQuota$} & \textbf{$\backtrackStride$} & \textbf{\#backtracks} & \multicolumn{2}{c} {\textbf{accuracy without backtrack}}  \\
& & & nucleus sampling top\_p = 0.9 & argmax \\
\hline
1 & 2 & 163 & 0.313 & 0.344 \\
\hline
  & 4 & 163 & 0.337 & 0.319 \\
\hline
  & 6 & 163 & 0.331 & 0.288 \\
\hline
2 & 2 & 311 & 0.347 & 0.328 \\
\hline
  & 4 & 297 & 0.357 & 0.349 \\
\hline
  & 6 & 286 & 0.374 & 0.373 \\
\hline
4 & 2 & 600 & 0.371 & 0.353 \\
\hline
  & 4 & 532 & 0.419 & 0.404 \\
\hline
  & 6 & 489 & 0.509 & 0.523 \\
\bottomrule
\end{tabular}
\end{small}
\end{center}
\caption{
Predicted backtracks were necessary.
For each setting of backtrack quota $\backtrackQuota$
and backtrack stride $\backtrackStride$, 
we report the number of times that \algoName (\Cref{alg:sampling_with_backtracking}) backtracked.
Moreover, we report the completion accuracy of letting the language model $\lm$ complete these backtracked prefixes without any backtrack.
For each setting, the completion accuracy is much lower than the accuracy reported in \Cref{table:dyck_verifier_error_inducing}.
This implies that these backtracked prefixes were indeed challenging prefixes for the $\lm$.
}
\label{table:predicted_backtracks_were_necessary}
\end{table*}



\clearpage
\subsubsection{Error analysis on the remaining mistakes}
\label{sec:experiments:synthetic:error_analysis}

Given the improvement of accuracy (\Cref{sec:experiments:synthetic:verifier_reduces_errors_unseen_ood}) as a result of our algorithm \algoName (\Cref{alg:sampling_with_backtracking}),
why did the model still make mistakes?

We conducted an error analysis which parses all mistakes into error types, 
and examine the generated token, the $\lm$ predicted most probable token, their predicted probabilities, and a few intermediate variables during the course of our algorithm \algoName (\Cref{alg:sampling_with_backtracking}).

In summary, the findings are:
\begin{enumerate}
    \item Among 225 generated mistakes, 222 correspond to predicting an incorrect closing bracket, and 3 correspond to pre-maturely predicting the end-of-sequence \texttt{<eos>} token.
    \item In all 225 cases, the final state of the algorithm has used up all the backtrack quota $\backtrackQuota$ allocated to it, so even if the error predictor was perfect, the algorithm would not have been had a chance to correct these mistakes. This suggests that suitably increasing backtrack quota $\backtrackQuota$ might be an effective approach in improving the accuracy (though there are trade-offs with query efficiency).
\end{enumerate}

A snapshot of our error analysis result is included in \Cref{fig:error_analysis}.
We have released our experimental codes,~\footnote{
\url{https://github.com/YuchenLi01/LM_Query_Complexity}
}  
which include more error analyses.

\begin{figure*}[!h]
  \centering
  \begin{minipage}[b]{1.0\textwidth}  
    \centering
    \includegraphics[width=1.0\textwidth]{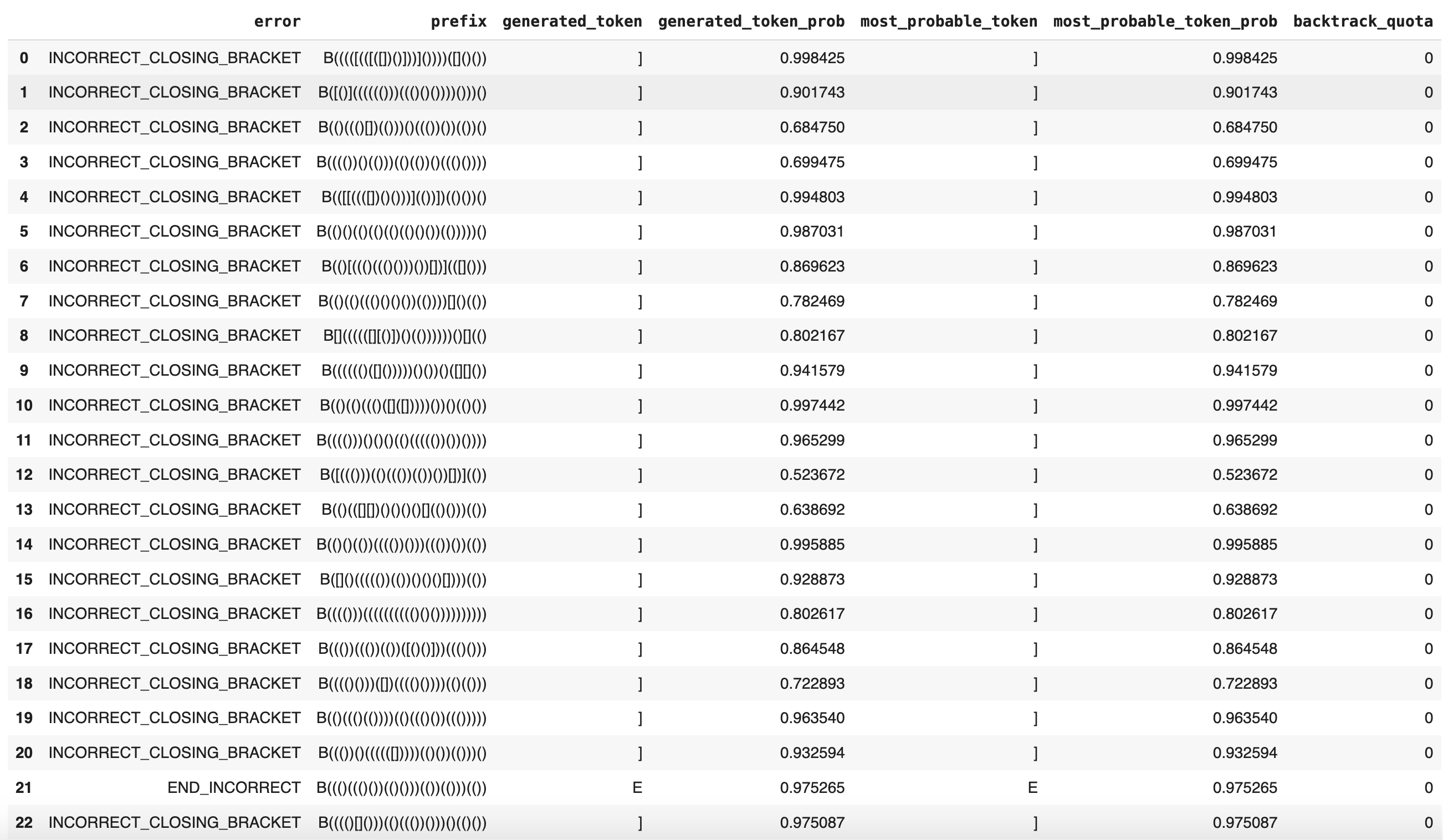}  
  \end{minipage}
  \caption{
  Error analysis table for mistakes of language model trained on Dyck grammar and sampled using \algoName (\Cref{alg:sampling_with_backtracking}).
  The last column records the remaining backtrack quota $\backtrackQuota$ at the time of generating the incorrect token.
  }  
  \label{fig:error_analysis}
\end{figure*}

\clearpage
\subsubsection{\algoName maintains diversity}
\label{sec:experiments:synthetic:diversity}

In this section, we show that the significant accuracy improvement is not at the cost of reducing diversity.

Our experiment freshly samples 100 prompts following the same distribution as $\dyck_{\text{OOD}}$ (\Cref{sec:experiments:synthetic:dyck}).
For each prompt, we let the trained $\lm$ independently sample 10 completions,
using \algoName (\Cref{alg:sampling_with_backtracking}) or the baseline algorithm,
and will compare how many (out of 10) samples were different,
and report the mean and standard error across the 100 prompts.

\Cref{table:dyck_diversity} shows that \algoName (\Cref{alg:sampling_with_backtracking}) generates similarly diverse samples as the baselines of
nucleus sampling with top\_p = 0.9 or 1.0.

\begin{table*}[h]
\begin{center}
\begin{small}
\begin{tabular}{ lccc }
\toprule
\textbf{$\backtrackQuota$} & \textbf{$\backtrackStride$} & \textbf{top\_p} & \textbf{diversity $\pm$ std err (out of 10)}  \\
\hline
4 & 4 & 1.0 & 5.52 $\pm$ 3.28  \\
\hline
0 & 0 & 0.9 &  5.47 $\pm$ 3.06  \\
\hline
0 & 0 & 1.0 & 5.84 $\pm$ 3.29  \\
\bottomrule
\end{tabular}
\end{small}
\end{center}
\caption{
Under the experiment setup described in \Cref{sec:experiments:synthetic:diversity},
\algoName (\Cref{alg:sampling_with_backtracking}) is similarly diverse as the baselines of
nucleus sampling with top\_p = 0.9 or 1.0.
}
\label{table:dyck_diversity}
\end{table*}

\clearpage
\subsection{Additional results about generating test cases with pretrained CodeLlama}
\label{sec:appendix:experiments:codellama}

This section complements our results in \Cref{sec:experiments:codellama}.

\subsubsection{Examples of prompts and model completions}
\label{sec:experiments:codellama:examples}

\begin{table}[h!]
\centering
\small
\begin{tabular}{l}
\toprule
def f(a, b): \\
\quad \quad   return a + b \\
 \\
List 8 test cases of the above function f, one in each line: \\
assert f(5, 5) == 10 \\
assert f(1, 5) == 6 \\
assert f(2, 8) == 10 \\
assert f(6, 2) == 8 \\
assert f(6, 9) == 15 \\
assert f(4, 5) == 9 \\
assert f(9, 6) == 15 \\
assert f(6, 1) == 7 \\
 \\
def knk(l, item): \\
\quad \quad   assert type(l) is list \\
\quad \quad   l.append(item) \\
\quad \quad   return l \\
 \\
List 8 test cases of the above function knk, one in each line: \\
\\
\bottomrule
\\
def f(a, b): \\
\quad \quad   return a + b \\
 \\
List 8 test cases of the above function f, one in each line: \\
assert f(5, 8) == 13 \\
assert f(1, 5) == 6 \\
assert f(8, 4) == 12 \\
assert f(6, 2) == 8 \\
assert f(3, 9) == 12 \\
assert f(1, 7) == 8 \\
assert f(5, 9) == 14 \\
assert f(1, 7) == 8 \\
 \\
def ovs(l, item): \\
\quad \quad   assert type(l) is list \\
\quad \quad   l.append(item) \\
\quad \quad   return l \\
 \\
List 8 test cases of the above function ovs, one in each line: \\
\bottomrule
\end{tabular} 
\caption{
Two example prompts for generating test cases for a simple implementation of the \texttt{append} function for Python lists.
In these examples, the following components are randomly generated and will be different for different prompts:
(1) the numbers in the demonstrations 
(e.g. \texttt{assert f(5, 8) == 13})
as long as they are correct for addition;
(2) the name of the target function
(e.g. \texttt{knk}, \texttt{ovs}).
}
\label{table:prompt_example}
\end{table}

\begin{table}[t]
\centering
\small
\begin{tabular}{l}
\toprule
assert knk([], 2) == [2] \\
assert knk([1], 2) == [1, 2] \\
assert knk([1, 3], 2) == [1, 3, 2] \\
assert knk([1, 3, 5], 2) == [1, 3, 5, 2] \\
assert knk([1, 3, 5, 7], 2) == [1, 3, 5, 7, 2] \\
assert knk([1, 3, 5, 7, 9], 2) == [1, 3, 5, 7, 9, 2] \\
assert knk([1, 3, 5, 7, 9, 11], 2) == [1, 3, 5, 7, 9, 11, 2] \\
def sum\_list(l): \\
\quad \quad    return sum(l) \\
List 8 test cases of the above function sum\_list, one in each line: \\
assert sum\_list([]) == 0 \\
assert sum\_list([1]) == 1 \\
assert sum\_list([1, 3]) == 4 \\
assert sum\_list([1, 3, 5]) == 9 \\
assert sum\_list([1, 3, 5, 7]) == 16 \\
assert sum\_list([1, 3, 5, 7, 9]) == 25 \\
assert sum\_list([1, 3, 5, 7, 9, 11]) == 36 \\
def sublist(l, sub): \\
\quad \quad    assert type \\
\\
\bottomrule
\\
assert ovs([], 1) == [1] \\
assert ovs([2], 1) == [1, 2] \\
assert ovs([1, 2], 1) == [1, 1, 2] \\
assert ovs([1, 2], 3) == [1, 2, 3] \\
assert ovs([1, 2], 0) == [0, 1, 2] \\
assert ovs([1, 2, 3], 4) == [1, 2, 3, 4] \\
assert ovs([], 0) == [0] \\
assert ovs([1, 2], 0) == [0, 1, 2] \\
\bottomrule
\end{tabular} 
\caption{
Two example generations by CodeLlama corresponding to the prompts in \Cref{table:prompt_example}.
Note that both generations are flawed:
(1) the model only generated 7 test cases instead of 8, even though the prompt requested 8.
Then, it generated irrelevant contents,
starting from \texttt{def sum\_list(l):}
(2) more than one generated test cases were wrong
(e.g. in \texttt{assert ovs([2], 1) == [1, 2]}, the correct right-hand-side should be \texttt{[2, 1]}).
More generally, we implemented a rule-based parser to analyze model generations and identify the error type (if any), and locate the first position of error.
}
\label{table:generation_example}
\end{table}

\clearpage
\subsubsection{Baselines}
\label{sec:experiments:codellama:baselines}

We extensively tuned the hyperparameters in common baseline decoding algorithms, including
\begin{itemize}
    \item nucleus sampling \citep{holtzman2020the}: we grid-searched top\_p $\in [0.0, 0.7, 0.8, 0.9, 0.95, 1.0]$.
    \item argmax greedy decoding: equivalent to top\_p = 0.0.
    \item standard autoregressive sampling: equivalent to top\_p = 1.0.
    \item temperature scaling \citep{ackley1985learning}: we grid-searched temperature $\in [0.0, 0.2, 0.4, 0.6, 0.8, 1.0, 1.2]$ (for each top\_p).
\end{itemize}

Through the above grid search, 
we found that the best combination was top\_p $= 0.95$, temperature $= 1.0$.

Besides, we consider baselines based on the \emph{block-best-of-n} rejection sampling approach to incorporate process rewards.
More details about this baseline are provided in the ``Block verifier" part of \Cref{sec:experiments:codellama:verifier}.
\begin{itemize}
    \item block-best-of-n: we grid-searched $n \in [2, 4, 8]$, fixing the best combination of top\_p and temperature found by the grid search above.
\end{itemize}

We will show that \algoName (\Cref{alg:sampling_with_backtracking})
clearly outperforms all these baselines in terms of the quality vs. query complexity trade-off.




\clearpage
\subsubsection{Full results of CodeLlama experiments in \Cref{sec:experiments:codellama}}
\label{sec:experiments:codellama:full_results}

\paragraph{Caption for \Cref{table:codellama_verifier_improves_accuracy} (on the next page).}
\algoName (\Cref{alg:sampling_with_backtracking}) improves accuracy
and outperforms commonly used baselines, including
nucleus sampling top\_p,
temperature scaling (temp),
and block best-of-n (BBoN) (\Cref{sec:experiments:codellama:verifier}).
Baselines are extensively hyperparameter tuned (\Cref{sec:experiments:codellama:baselines}).
Backtrack quota $\backtrackQuota = 0$ means a baseline without verifier.
When $\backtrackQuota > 0$,
the row denotes \Cref{alg:sampling_with_backtracking} 
with the corresponding $\backtrackQuota$ and $\backtrackStride$.
The column \emph{layer idx} denotes which layer of CodeLlama provided the representations for training the verifier,
and \emph{err threshold} denotes the cutoff below which the verifier output is interpreted as a rejection
(both were experimented in \Cref{sec:experiments:codellama:verifier}).
When BBoN is specified,
the row denotes the number of candidates generated for each block;
otherwise, the row does not use block best-of-n.
The rows are sorted by $\text{Acc}_\text{distinct}$.
Controlling top\_p and temperature, 
\Cref{alg:sampling_with_backtracking}
leads to better tradeoff between $\text{Acc}_\text{distinct}$ 
and query complexity $\mathcal{C}$
(both defined in \Cref{sec:experiments:codellama:task})
than all other methods.
The experiment was repeated 5 times,
and we report the standard errors.

\begin{table*}[!h]
\begin{center}
\begin{small}
\begin{tabular}{ lcccccccc }
\toprule
\textbf{$\backtrackQuota$} & \textbf{$\backtrackStride$} & \textbf{layer idx} & \textbf{err threshold} & \textbf{top\_p} & \textbf{temp} & \textbf{BBoN} & \textbf{$\text{Acc}_\text{distinct}$ $\pm$ std err} & \textbf{$\mathcal{C}$} \\
\hline
4 & 4 & 27 & 0.1 & 0.95 & 1.0 &  &  0.714 $\pm$ 0.011 & 39443 $\pm$ 235  \\
\hline
4 & 4 & 31 & 0.5 & 0.95 & 1.0 &  &  0.688 $\pm$ 0.028 & 39629 $\pm$ 135  \\
\hline
0 &  & 27 &  & 0.95 & 1.0 & 2 &  0.684 $\pm$ 0.038 & 39364 $\pm$ 1252  \\
\hline
4 & 4 & 31 & 0.1 & 0.95 & 1.0 &  &  0.677 $\pm$ 0.033 & 39546 $\pm$ 98  \\
\hline
4 & 4 & 27 & 0.5 & 0.95 & 1.0 &  &  0.676 $\pm$ 0.019 & 38555 $\pm$ 140  \\
\hline
0 &  &  &  & 0.95 & 1.0 &  &  0.660 $\pm$ 0.042 & 38231 $\pm$ 165  \\
\hline
4 & 4 & 27 & 0.1 & 1.0 & 1.0 &  &  0.639 $\pm$ 0.061 & 31274 $\pm$ 1559  \\
\hline
0 &  &  &  & 0.9 & 1.0 &  &  0.634 $\pm$ 0.023 & 38393 $\pm$ 14  \\
\hline
0 &  &  &  & 0.9 & 1.2 &  &  0.630 $\pm$ 0.028 & 38005 $\pm$ 232  \\
\hline
0 &  &  &  & 0.8 & 1.2 &  &  0.627 $\pm$ 0.015 & 38343 $\pm$ 90  \\
\hline
0 &  & 27 &  & 0.95 & 1.0 & 4 &  0.623 $\pm$ 0.036 & 65496 $\pm$ 7638  \\
\hline
4 & 10 & 27 & 0.1 & 1.0 & 1.0 &  &  0.622 $\pm$ 0.046 & 32923 $\pm$ 1772  \\
\hline
4 & 4 & 27 & 0.5 & 1.0 & 1.0 &  &  0.604 $\pm$ 0.047 & 31091 $\pm$ 968  \\
\hline
4 & 10 & 27 & 0.5 & 1.0 & 1.0 &  &  0.604 $\pm$ 0.030 & 27287 $\pm$ 7580  \\
\hline
0 &  &  &  & 0.95 & 1.2 &  &  0.584 $\pm$ 0.027 & 36601 $\pm$ 535  \\
\hline
0 &  &  &  & 1.0 & 0.8 &  &  0.562 $\pm$ 0.021 & 36610 $\pm$ 669  \\
\hline
0 &  & 27 &  & 0.95 & 1.0 & 8 &  0.559 $\pm$ 0.038 & 122933 $\pm$ 3832  \\
\hline
0 &  &  &  & 0.7 & 1.2 &  &  0.531 $\pm$ 0.035 & 38400 $\pm$ 0  \\
\hline
0 &  &  &  & 0.95 & 0.8 &  &  0.523 $\pm$ 0.029 & 38386 $\pm$ 28  \\
\hline
0 &  &  &  & 0.8 & 1.0 &  &  0.511 $\pm$ 0.028 & 38400 $\pm$ 0  \\
\hline
0 &  &  &  & 1.0 & 1.0 &  &  0.504 $\pm$ 0.025 & 30754 $\pm$ 1272  \\
\hline
0 &  &  &  & 0.9 & 0.8 &  &  0.466 $\pm$ 0.032 & 38400 $\pm$ 0  \\
\hline
4 & 4 & 27 & 0.1 & 1.0 & 1.2 &  &  0.440 $\pm$ 0.026 & 24916 $\pm$ 954  \\
\hline
0 &  &  &  & 1.0 & 0.6 &  &  0.399 $\pm$ 0.070 & 38320 $\pm$ 73  \\
\hline
0 &  &  &  & 0.7 & 1.0 &  &  0.353 $\pm$ 0.021 & 38400 $\pm$ 0  \\
\hline
0 &  &  &  & 0.8 & 0.8 &  &  0.351 $\pm$ 0.039 & 38400 $\pm$ 0  \\
\hline
0 &  &  &  & 0.95 & 0.6 &  &  0.337 $\pm$ 0.053 & 38400 $\pm$ 0  \\
\hline
4 & 4 & 27 & 0.5 & 1.0 & 1.2 &  &  0.334 $\pm$ 0.013 & 24217 $\pm$ 1214  \\
\hline
0 &  &  &  & 0.9 & 0.6 &  &  0.284 $\pm$ 0.044 & 38400 $\pm$ 0  \\
\hline
0 &  &  &  & 1.0 & 1.2 &  &  0.269 $\pm$ 0.025 & 21906 $\pm$ 1780  \\
\hline
0 &  &  &  & 0.7 & 0.8 &  &  0.239 $\pm$ 0.019 & 38400 $\pm$ 0  \\
\hline
0 &  &  &  & 0.8 & 0.6 &  &  0.212 $\pm$ 0.011 & 38400 $\pm$ 0  \\
\hline
0 &  &  &  & 1.0 & 0.4 &  &  0.207 $\pm$ 0.029 & 38400 $\pm$ 0  \\
\hline
0 &  &  &  & 0.95 & 0.4 &  &  0.176 $\pm$ 0.013 & 38400 $\pm$ 0  \\
\hline
0 &  &  &  & 0.9 & 0.4 &  &  0.147 $\pm$ 0.013 & 38400 $\pm$ 0  \\
\hline
0 &  &  &  & 0.7 & 0.6 &  &  0.101 $\pm$ 0.028 & 38400 $\pm$ 0  \\
\hline
0 &  &  &  & 1.0 & 0.2 &  &  0.080 $\pm$ 0.020 & 38400 $\pm$ 0  \\
\hline
0 &  &  &  & 0.8 & 0.4 &  &  0.074 $\pm$ 0.027 & 38400 $\pm$ 0  \\
\hline
0 &  &  &  & 0.95 & 0.2 &  &  0.057 $\pm$ 0.018 & 38400 $\pm$ 0  \\
\hline
0 &  &  &  & 0.9 & 0.2 &  &  0.029 $\pm$ 0.015 & 38400 $\pm$ 0  \\
\hline
0 &  &  &  & 0.7 & 0.4 &  &  0.025 $\pm$ 0.016 & 38400 $\pm$ 0  \\
\hline
0 &  &  &  & 0.8 & 0.2 &  &  0.021 $\pm$ 0.014 & 38400 $\pm$ 0  \\
\hline
0 &  &  &  & 0.7 & 0.2 &  &  0.018 $\pm$ 0.011 & 38400 $\pm$ 0  \\
\hline
0 &  &  &  & 0.0 & 1.0 &  &  0.013 $\pm$ 0.000 & 38400 $\pm$ 0  \\
\bottomrule
\end{tabular}
\end{small}
\end{center}
\vspace{-4mm}
\caption{
\algoName (\Cref{alg:sampling_with_backtracking}) improves accuracy
and outperforms commonly used baselines, including
nucleus sampling top\_p,
temperature scaling (temp),
and block best-of-n (BBoN) (\Cref{sec:experiments:codellama:verifier}).
Due to space constraints, more detailed captions are in the beginning of this section.
\underline{
\textit{
\textbf{
To help readers parse these results, we included smaller tables, each analyzing a single aspect:
}
}
}
please refer to 
\Cref{table:codellama_verifier_improves_accuracy_simplified} in \Cref{sec:experiments:codellama:verifier_improves_accuracy},
\Cref{table:codellama_error_prediction_threshold} in \Cref{sec:experiments:codellama:verifier},
\Cref{table:layer27_better_than_layer31} in \Cref{sec:experiments:codellama:verifier},
and \Cref{fig:codellama_query_efficiency} in \Cref{sec:experiments:codellama:efficient}.
}
\label{table:codellama_verifier_improves_accuracy}
\end{table*}

\clearpage
\subsubsection{Visualizing the query efficiency of \algoName}
\label{sec:experiments:codellama:visualize_efficiency}

This section complements the query efficiency visualization discussed in \Cref{sec:experiments:codellama:efficient}.

\begin{figure*}[!h]
  \centering
  \begin{minipage}[b]{1.0\textwidth}  
    \centering
    \includegraphics[width=1.0\textwidth]{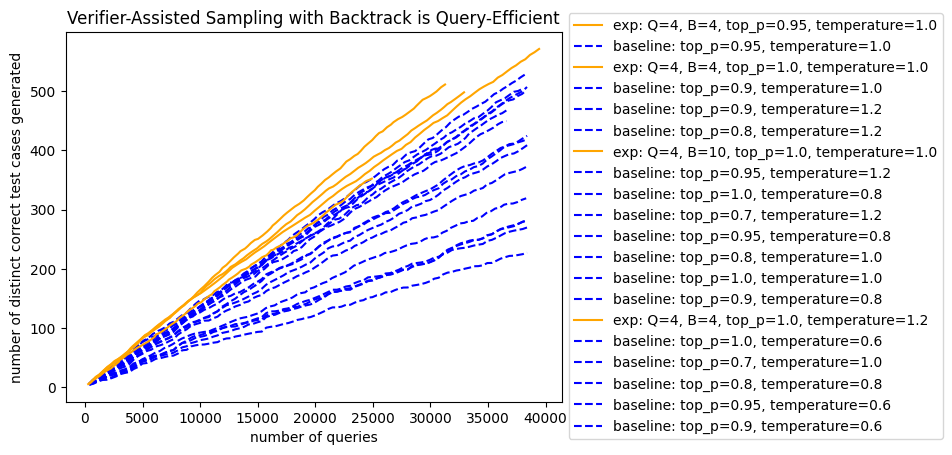}  
  \end{minipage}
  \caption{
  Similar to \Cref{fig:codellama_query_efficiency},
  just more zoomed-in, 
  excluding block best-of-n baselines (\Cref{sec:experiments:codellama:verifier}).
  }  
  \label{fig:codellama_query_efficiency_no_bon}
\end{figure*}

\clearpage
\subsubsection{\algoName generalizes better to out-of-distribution prompts}
\label{sec:experiments:codellama:ood}

In this section, we show that 
\algoName (\Cref{alg:sampling_with_backtracking})
generalizes better to out-of-distribution prompts
than the best nucleus sampling and temperature scaling baseline in \Cref{sec:experiments:codellama:baselines}.
Unlike the synthetic Dyck grammar setting,
on real-world LLMs we do not have a precise quantitative control over how ``out-of-distribution" a prompt is for the LLM.
We therefore assume that a sufficient condition for a prompt in our setup to be out-of-distribution is that
the name of the target function denotes some meaning which is different from
the actual implemented functionality
(i.e. list \texttt{append}) 
(recall the task setup in \Cref{sec:experiments:codellama:task}).
Two examples of such out-of-distribution prompt are provided in \Cref{table:prompt_example_ood}.
We validate this assumption by observing that the accuracy indeed degrades on such ``out-of-distribution" prompts,
suggesting that the model is indeed confused by the inconsistency between the function names and the function implementations.
However, analogous to our observations on the synthetic Dyck grammar (\Cref{sec:experiments:synthetic:verifier_reduces_errors_unseen_ood}),
\algoName (\Cref{alg:sampling_with_backtracking}) again suffers much less reduction in accuracy on these ``out-of-distribution" prompts.
The detailed comparisons are reported in \Cref{table:codellama_ood}.

\begin{table}[!h]
\centering
\small
\begin{tabular}{l}
\toprule
def f(a, b): \\
\quad \quad   return a + b \\
 \\
List 8 test cases of the above function f, one in each line: \\
assert f(6, 5) == 11 \\
assert f(3, 2) == 5 \\
assert f(5, 4) == 9 \\
assert f(1, 5) == 6 \\
assert f(5, 4) == 9 \\
assert f(3, 5) == 8 \\
assert f(5, 6) == 11 \\
assert f(2, 6) == 8 \\
 \\
def add(l, item): \\
\quad \quad   assert type(l) is list \\
\quad \quad   l.append(item) \\
\quad \quad   return l \\
 \\
List 8 test cases of the above function add, one in each line: \\
\\
\bottomrule
\\
def f(a, b): \\
\quad \quad   return a + b \\
 \\
List 8 test cases of the above function f, one in each line: \\
assert f(8, 7) == 15 \\
assert f(8, 1) == 9 \\
assert f(4, 7) == 11 \\
assert f(8, 4) == 12 \\
assert f(7, 4) == 11 \\
assert f(8, 4) == 12 \\
assert f(1, 1) == 2 \\
assert f(5, 5) == 10 \\
 \\
def exp(l, item): \\
\quad \quad   assert type(l) is list \\
\quad \quad   l.append(item) \\
\quad \quad   return l \\
 \\
List 8 test cases of the above function exp, one in each line: \\
\bottomrule
\end{tabular} 
\caption{
Two example \emph{out-of-distribution} prompts for generating test cases for a simple implementation of the \texttt{append} function for Python lists.
Different from the prompts in \Cref{table:prompt_example},
here the function names denote a clear meaning
(e.g. \texttt{add} or \texttt{exp}), 
which, however, is different from what the function implements
(i.e. \texttt{append}).
}
\label{table:prompt_example_ood}
\end{table}

\begin{table*}[h]
\begin{center}
\begin{small}
\begin{tabular}{ lccccc }
\toprule
\textbf{$\backtrackQuota$} & \textbf{$\backtrackStride$} & \textbf{err threshold} & \textbf{in-distribution $\text{Acc}_\text{distinct}$ $\pm$ std err} & \textbf{OOD $\text{Acc}_\text{distinct}$ $\pm$ std err} \\
\hline
4 & 4 & 0.1 & 0.714 $\pm$ 0.011 & 0.710 $\pm$ 0.029  \\
\hline
4 & 4 & 0.5 & 0.676 $\pm$ 0.019 &  0.687 $\pm$ 0.024  \\
\hline
0 &   &    & 0.660 $\pm$ 0.042 &  0.606 $\pm$ 0.034  \\
\bottomrule
\end{tabular}
\end{small}
\end{center}
\caption{
\algoName (\Cref{alg:sampling_with_backtracking})
generalizes better to out-of-distribution prompts
than the best nucleus sampling and temperature scaling baseline in \Cref{sec:experiments:codellama:baselines},
which we identified by grid search (\Cref{table:codellama_verifier_improves_accuracy}) to be 
top\_p = 0.95,
and temperature = 1.0.
We manually pick 10 target function names according to \Cref{sec:experiments:codellama:ood}
which were unseen when training the verifier (\Cref{sec:experiments:codellama:verifier}).
When backtrack quota $\backtrackQuota = 0$,
the row denotes a baseline algorithm that does not use the verifier
(and consequently the backtrack stride $\backtrackStride$ will not matter).
The column \emph{err threshold} denotes the cutoff below which the error predictor output is interpreted as a rejection (\Cref{sec:experiments:codellama:verifier}).
When $\backtrackQuota > 0$,
the row denotes \algoName (\Cref{alg:sampling_with_backtracking}) 
with the corresponding $\backtrackQuota$ and $\backtrackStride$.
\algoName (\Cref{alg:sampling_with_backtracking})
suffered minor or no drop between in-distribution and OOD $\text{Acc}_\text{distinct}$, 
whereas the baseline suffered a drop by more than one standard error.
The experiment was repeated 5 times,
and we report the standard errors.
}
\label{table:codellama_ood}
\end{table*}

\clearpage
\subsection{Additional ablation experiments on the \algoName algorithm (\Cref{alg:sampling_with_backtracking})}
\label{sec:appendix:experiments:algo}

Besides the ablation experiments in \Cref{sec:experiments:codellama:verifier} which probe various aspects of verifier training,
in this section, we focus on one algorithmic component.

Concretely, line 10 of \algoName (\Cref{alg:sampling_with_backtracking}) 
re-generates the erased positions using argmax.
This was motivated by our results in \Cref{sec:experiments:synthetic:dyck} which suggest that \emph{out-of-distribution prefix} is a cause of generator mistakes.
As a remedy, redoing the erased positions using argmax is intended to increase the generator-predicted probability of the partially sampled generation,
which (concatenated with the prompt) will be the prefix for subsequent generation steps.
We include an ablation study verifying that this improves the accuracy,
significantly under the synthetic data setting (\Cref{table:ablation:no_argmax_dyck}),
and only slightly (without hurting diversity) under the real data setting (\Cref{table:ablation:no_argmax_codellama}).

\begin{table}[h]
\begin{center}
\begin{small}
\begin{tabular}{ lcc }
\toprule
\textbf{sampling algorithm} & \textbf{\#errors $\pm$ std err} \\
\hline
\Cref{alg:sampling_with_backtracking} & 179.4 $\pm$ 1.020 \\
\hline
ablation: no argmax & 245.8 $\pm$ 8.658 \\
\bottomrule
\end{tabular}
\end{small}
\end{center}
\caption{
Re-generating the erased positions using argmax in \algoName (\Cref{alg:sampling_with_backtracking}) reduces completion errors on unseen out-of-distribution (OOD) prefixes in Dyck grammar.
We fixed 
nucleus sampling \citep{holtzman2020the} top\_p = 0.9,
backtrack quota $\backtrackQuota = 4$,
and backtrack stride $\backtrackStride = 4$
(the best settings in \Cref{table:verifier_reduces_errors_unseen_ood}).
The row ``ablation: no argmax" refers to removing lines 9-12 in \Cref{alg:sampling_with_backtracking}.
We report the number of completion errors that occur when completing an unseen set of 10000 independently sampled out-of-distribution prompts $\dyck_{OOD}^{unseen}$.
The experiment was repeated 5 times,
and we report the standard errors.
}
\label{table:ablation:no_argmax_dyck}
\end{table}

\begin{table*}[h]
\begin{center}
\begin{small}
\begin{tabular}{ lcc }
\toprule
\textbf{sampling algorithm} & \textbf{err threshold}  & \textbf{$\text{Acc}_\text{distinct}$ $\pm$ std err} \\
\hline
\Cref{alg:sampling_with_backtracking} & 0.1 &  0.714 $\pm$ 0.011 \\
\hline
ablation: no argmax & 0.1 & 0.711 $\pm$ 0.032 \\
\hline
\Cref{alg:sampling_with_backtracking} & 0.5 & 0.676 $\pm$ 0.019 \\
\hline
ablation: no argmax & 0.5  & 0.663 $\pm$ 0.023 \\
\bottomrule
\end{tabular}
\end{small}
\end{center}
\caption{
Re-generating the erased positions using argmax in \algoName (\Cref{alg:sampling_with_backtracking}) slightly improves the accuracy-diversity tradeoff (\Cref{sec:experiments:codellama:task}) in our test case generation task.
We fixed 
nucleus sampling \citep{holtzman2020the} top\_p = 0.95,
backtrack quota $\backtrackQuota = 4$,
and backtrack stride $\backtrackStride = 4$
(the best settings in \Cref{table:codellama_verifier_improves_accuracy}).
The row ``ablation: no argmax" refers to removing lines 9-12 in \Cref{alg:sampling_with_backtracking}.
The column \emph{err threshold} denotes the cutoff below which the error predictor output is interpreted as a rejection (\Cref{sec:experiments:codellama:verifier}).
The experiment was repeated 5 times,
and we report the standard errors.
}
\label{table:ablation:no_argmax_codellama}
\end{table*}


\end{document}